\def\blfootnote{\gdef\@thefnmark{}\@footnotetext}
\newcommand{\cD}{\mathcal{D}}
\newcommand{\cX}{\mathcal{X}}
\newcommand{\cY}{\mathcal{Y}}
\newcommand{\cA}{\mathcal{A}}
\newcommand{\cP}{\mathcal{P}}
\newcommand{\cH}{\mathcal{H}}
\newcommand{\cG}{\mathcal{G}}
\newcommand{\E}{\mathop{\mathbb{E}}}
\newcommand{\Dp}{D^{\text{private}}}
\newtheorem{definition}{Definition}
\newtheorem{lemma}{Lemma}
\newtheorem{thm}{Theorem}
\author[1]{Martin Bertran$^\star$}
\author[1]{Shuai Tang$^\star$}
\author[1,2]{Michael Kearns}
\author[1,3]{\\Jamie Morgenstern}
\author[1,2]{Aaron Roth}
\author[1,4]{Zhiwei Steven Wu}
\affil[1]{Amazon AWS AI/ML}
\affil[2]{University of Pennsylvania}
\affil[3]{University of Washington}
\affil[4]{Carnegie Mellon University}
\date{}
\title{Scalable Membership Inference Attacks \\ via Quantile Regression}
\begin{document}

\maketitle

\begin{abstract}

Membership inference attacks are designed to determine, using black box access to trained models, whether a particular example was used in training or not. Membership inference can be formalized as a hypothesis testing problem. The most effective existing attacks estimate the distribution of some test statistic (usually the model's confidence on the true label) on points that were (and were not) used in training by training many \emph{shadow models}---i.e. models of the same architecture as the model being attacked, trained on a random subsample of data. While effective, these attacks are extremely computationally expensive, especially when the model under attack is large. \blfootnote{$^\star$Martin and Shuai are lead authors; all other authors are listed in alphabetical order. \{maberlop,shuat\}@amazon.com}

We introduce a new class of attacks based on performing quantile regression on the distribution of confidence scores induced by the model under attack on points that are not used in training. We show that our method is competitive with state-of-the-art shadow model attacks, while requiring substantially less compute because our attack requires training only a single model. Moreover, unlike shadow model attacks, our proposed attack does not require any knowledge of the architecture of the model under attack and is therefore truly ``black-box". We show the efficacy of this approach in an extensive series of experiments on various datasets and model architectures.

\end{abstract}

\section{Introduction}
The basic goal of privacy-preserving machine learning is to find models that are predictive on some underlying data distribution, without being disclosive of the particular data points on which they were trained. The simplest kind of attack that can be launched on a trained model---falsifying privacy guarantees---is a membership inference attack. A membership inference attack, informally, is a statistical  test that is able to reliably determine whether a particular data point was included in the training set used to train the model or not.

Almost all membership inference attacks are based on the observation that models tend to overfit their training sets in different ways. In particular, they tend to systematically predict higher confidence in the true labels of data points from
their training set, compared to points drawn from the same distribution not in their training set. The confidence that a model places on the true label of a data-point is thus a natural test statistic to build a membership-inference hypothesis test around. A variety of recent methods \citep{shokri2017membership,long2020pragmatic,sablayrolles2019white, song2021systematic, carlini2022membership} are based around this idea, and aim to estimate the distribution of the test statistic (the confidence assigned to the true label of a datapoint) over the distribution of datapoints that were \emph{not used in training} (and sometimes, also over the distribution of datapoints that were used in training) for the purpose of designing tests that can reject the null hypothesis---that a data point under attack was not used in training---with the desired level of confidence. 

\begin{wrapfigure}{rt}{0.50\textwidth}
\includegraphics[width=0.9\linewidth]{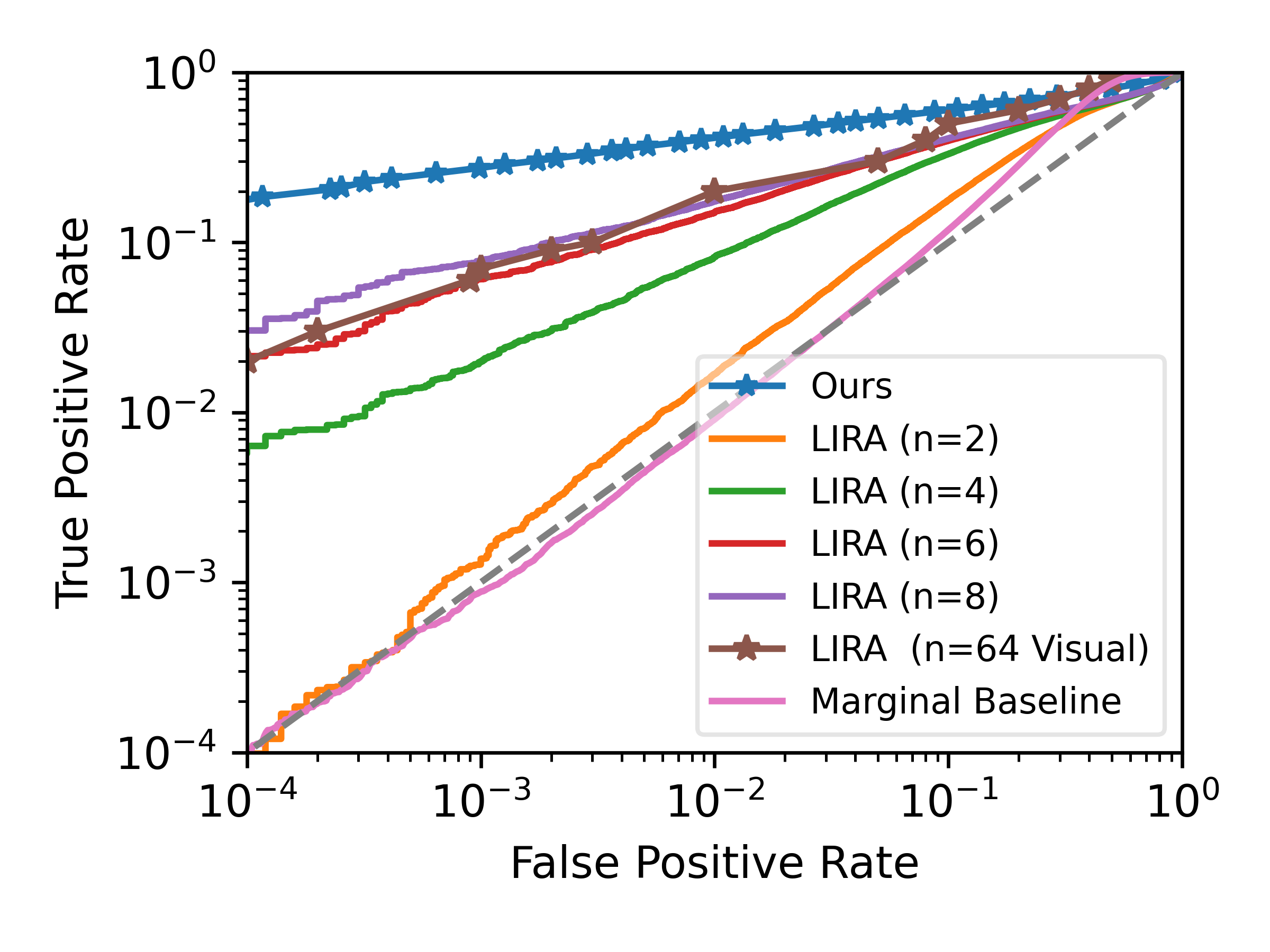} 

\caption{{\small {Comparing the true positive rate vs. false positive rate of  our membership inference attack with the marginal baseline proposed in \cite{yeom2018privacy} and the state-of-the-art LiRA proposed in \cite{carlini2022membership} evaluated at 2, 4, 6, and 8 shadow models. We also provide a visual readout of their $64$ shadow model results, as reported in their paper (we did not have the compute necessary to reproduce this). We faithfully replicated LIRA's attack setup and produced better results than their reported values.
Our single-model quantile regression attack can reliably identify training samples on a ResNet-50 ImageNet target model ($67.5\%$ test accuracy) without knowledge of the target architecture.}}
}
\label{fig:mia-imagenet1k}
\end{wrapfigure}
The efficacy of this class of attacks depends in large part on the granularity to which the distribution of the test statistic can be estimated. The simplest (and most computationally efficient) approach, originally proposed by \cite{yeom2018privacy}, is to estimate this distribution \emph{marginally} --- i.e. without conditioning on the covariates $x$ of the example being attacked. This reduces the problem to a simple one-dimensional estimation problem, and---under mild assumptions---the optimal hypothesis test (by the Neyman-Pearson Lemma) is simply a fixed threshold $\tau$ on the test statistic---examples are declared to have been used in training if the confidence the model places on their true label exceeds $\tau$, and are declared to have been not used in training otherwise. More sophisticated methods attempt to estimate the distribution of the test statistic conditional on the inclusion of a target point $x$ in the training data (over the randomness of the selection of the other points used in training). Our method and others follow this approach, where the confidence score produced by each example $x$ by the target model is compared to a sample-dependent  threshold $\tau(x)$— points $x$ with scores exceeding this threshold  are declared to be used in training. The most common method under this approach is to train \emph{shadow models} \citep{shokri2017membership,long2020pragmatic,sablayrolles2019white, song2021systematic, carlini2022membership}. Informally, shadow models are trained with the same architecture as the model being attacked, using random subsets of training data, that either include or do not include the target point $x$. As a result, each shadow model gives a sample of the test statistic conditional on $x$'s inclusion (or non-inclusion) in the training set, where the randomness is over the \emph{other examples} in the training set and any randomness involved in training. Because many samples from this distribution on the test statistic are needed to estimate it, membership inference attacks based on shadow models generally require training many shadow models of the same architecture as the model under attack; between 64 and 256 shadow models were used in \cite{carlini2022membership} (Figure \ref{fig:mia-imagenet1k} compares the receiver operating characteristic (ROC) of the attack on ImageNet against our proposed approach).  Especially for large models, this makes shadow model attacks prohibitive, for at least two reasons:
\begin{enumerate}
    \item \textbf{Training Cost}: Widely used commercial models, on which membership inference attacks would be most damaging, are extremely large and expensive to train. An attacker launching a membership inference attack based on shadow models must train many (dozens to hundreds) models of the same architecture. Thus the computational costs can be hundreds of times larger than the costs of training the model under attack, which for commercial models is prohibitive for attackers without enormous resources.
    \item \textbf{Knowledge of the Model Under Attack:} As argued in \cite{carlini2022membership}, and also shown in Appendix \ref{app:shadow_model_architecture_mismatch}, when the shadow model is of the same complexity as or more complex than the target model, the LiRA attack performs well, and when less complex shadow models are deployed, the success rate of the attack drops precipitously. Hence the success of the attack  depends on  knowledge of the model under attack. But many aspects of the architecture and training process for large commercial models are not publicly known, making this style of attack less effective in realistic settings.
\end{enumerate}

\subsection{Our Results}

We introduce a new class of membership inference attacks, based on quantile regression, that is able to mitigate these issues:
\begin{enumerate}
    \item Like attacks based on shadow models, it is a \emph{conditional} attack, subjecting different examples $x$ to different thresholds $\tau(x)$. However,
    \item It only requires training a single model, and
    \item The architecture of the model used in the attack need not be related to the architecture of the model under attack, and so no knowledge of the model architecture or training algorithm used to train the model under attack is needed.
\end{enumerate}

\textbf{Our quantile regression attack}. Given a model $f$ that we intend to attack, we collect a dataset of labelled examples $\{(x_i,y_i)\}_{i=1}^n$ from the underlying data distribution, known to not have been used in training.\footnote{Under the presumption that only a small fraction of data sampled from the distribution were used in training, then we may simply take a random sample from the underlying distribution, and be confident that it is representative of data not used in the training procedure for $f$.} For each example $(x_i,y_i)$ in our dataset, we evaluate the model $f$ on example $x_i$, and record the (real-valued) confidence score $s(x_i,y_i)$ that the model places on the correct label $y$. We then train a quantile regression model $q$ on the dataset $\{(x_i,s(x_i,y_i))\}$ consisting of examples $x$ labeled with their confidence scores $s(x,y)$. Informally, the quantile regression model is trained to predict $q(x)$, a target $1-\alpha$ quantile of the conditional distribution on $s(x_i,y_i)$, given $x_i$.\footnote{This is an informal description, as in realizable settings, conditioning on $x_i$ in its entirety leaves a point mass distribution on $s(x_i,y_i)$ --- i.e. the deterministic confidence score for $y_i$ predicted by the model $f(x_i)$. See Section \ref{sec:our_attack} for precise guarantees.} 
Intuitively, a score $s(x_i, y_i)$ larger than the $1-\alpha$ 
 quantile $q(x_i)$ indicates that $f$ assigns a confidence on the true label that is higher than a $1-\alpha$ fraction of the examples not used in training --- giving us evidence that the example in question was part of the training set.
Thus, given a new target point $(x,y)$, we \emph{reject the null hypothesis} that $x$ was not used in training (i.e. we declare $x$ to have been used in training) whenever $s(x,y)$  exceeds $q(x)$ --- i.e. when $s(x,y) \geq q(x)$. Similarly, whenever $s(x,y) < q(x)$, we do not reject the null hypothesis, and declare the point $(x,y)$ to have not been used in training. 

This attack by design has a false positive rate of $\alpha$--- the probability that it incorrectly declares a randomly selected point $(x,y)$ that was not used in training to have been used in training is $\alpha$. The ability of $q$  to correctly label those examples used in training as such, measured by its true positive rate or precision or related statistics, will vary with $\alpha$ (the higher $\alpha$, the larger the number of positive labels our test will assign). So, in varying our target $\alpha$, we can sweep out our test's tradeoff between false positive and true positive
rates. 

The primary strength of our attack is that we need only a \emph{single} quantile regression model $q$, rather than a large number of shadow models. Furthermore, because the success of our attack depends only on how well $q$ predicts the quantiles of the confidence score distribution of $f$ (rather than producing confidence scores drawn from the same distribution as $f$), $q$ need not have any relationship to the architecture of $f$ or any knowledge of it--- the only access to $f$ that is needed is the ability to evaluate confidence scores $s(x,y)$ produced by $f$ given examples $x$. Our attack is, therefore, more ``black-box" than those which use shadow models of the same architecture as $f$.

We derive a basic theory for our approach based on quantile regression, which trains a model to predict quantiles by minimizing \emph{pinball loss}. We run an extensive series of experiments and find that our quantile regression approach is competitive with (and sometimes more effective than) much more computationally expensive shadow model approaches. The relative effectiveness of our approach appears to grow the more complex the classification task and model under attack are. For example, when attacking a ResNet-50 model trained on ImageNet-1k, our attack (which trains only a single model) outperforms shadow model approaches trained much more expensively at every false positive rate. On simpler and less data rich tasks (like CIFAR-10), the accuracy of our approach dominates the marginal baseline of \cite{yeom2018privacy}, but falls short of shadow model approaches. Thought provokingly, however, we find that when this occurs, it is because the shadow model approach has found thresholds that correspond to a quantile model with lower pinball loss than our trained quantile regression model. This suggests that our fundamental approach of pinball loss minimization is sound, and that our attempts to directly optimize for it are less successful when data is less plentiful. Across all experiments, we find that the best quantile regression method (as measured by pinball loss) is uniformly the best membership inference attack.

\subsection{Additional Related Work}
Starting with the seminal work of \cite{Homer}, membership inference has become one of the most widely studied classes of privacy attacks. Most approaches for membership inference  determine whether an example is part of the training set via some score function, which can be loss \citep{yeom2018privacy, sablayrolles2019white}, confidence \citep{Salem0HBF019}, entropy \citep{song2021systematic}, or difficulty calibration \citep{watson2021importance} among others.
Another common approach is to query the model on similar or related examples to the target point \citep{wen2023canary, jayaraman2020revisiting, long18, Li2020MembershipLI}.

We focus the remainder of our discussion on  related work on shadow model approaches to membership inference since they are our main benchmarks. Most work on shadow models considers a setup where there is a private dataset $D^{\text{private}}$ (unknown to the attacker) drawn from a distribution $Q$, and an algorithm $\mathcal{A}$ for training a model $f = \mathcal{A}(D^{\text{private}})$. The attacker has access to a set of
data $D^{\text{public}}$ drawn from the same distribution $Q$ and partial query access to the model $f$ from which the attacker can compute scores $s(x,y)$ given target examples $(x,y)$.  The attacker aims to predict, for a data point $(x,y)$, whether $(x,y)\in D^{\text{private}}$.

\paragraph{Likelihood Ratio Attacks.} Membership inference attacks are fundamentally hypotheses tests between two competing hypotheses ($H_0: (x, y) \not\in D^{\text{private}}$, $H_1: (x, y) \in D^{\text{private}}$). By the Neyman-Pearson lemma~\citep{neymanpearson}, the optimal hypothesis test based on a test statistic $s(x,y)$ computes the likelihood ratio of the score under the null and alternative hypothesis, and subjects the likelihood ratio to a  threshold $\tau$. 
The choice of the threshold $\tau$ determines the trade-off between precision (the fraction of examples labeled as belonging to the private dataset which did belong to the dataset) and recall (or true positive rate) of the resulting classifier. We call a membership inference attack carried out with this classifier a likelihood ratio attack (LiRA), introduced by~\citet{carlini2022membership}.  LiRA was designed to achieve very high precision (very few false positives relative to the number of positive predictions), as they noted
high precision corresponds to a high degree of confidence 
that the data points accused of being part of the training set were, in fact, part of the training set. Prior work had looked at global notions of inference attack quality, at possibly much lower degrees of precision~\citep{labelone,labeltwo}.

The main difficulty with implementing LiRA directly is that 
 the density functions of the score under the null and alternative hypothesis are unknown.  Instead, the literature aims to estimate these density functions, primarily by training a collection of \emph{shadow}  models \citep{shokri2017membership,long2020pragmatic,sablayrolles2019white, song2021systematic, carlini2022membership}. Shadow model attacks split the attacker's dataset $D^{\text{public}}$ into several pairs of shadow public/private datasets $D_i^{\text{private}},D_i^{\text{public}}$, and for each of these shadow datasets, a shadow model $f_i$ is trained on $D_i^{\text{private}}$. The shadow model $f_i$, and corresponding datasets $D_i^{\text{private}},D_i^{\text{public}}$ are used to generate private and public score samples $s^{\text{private}}_i, s^{\text{public}}_i$ from which to estimate the likelihood ratio function given parametric assumptions.
~\citet{carlini2022membership} used a large number of shadow models to achieve high precision. This approach works well---but it is computationally demanding because it requires training many shadow models.

\section{Preliminaries}
We study attacks on models $f$ that solve a supervised learning problem defined over a distribution $\cD \in \Delta (\cX \times \cY)$ of labeled examples $(x,y)$, consisting of \emph{feature vectors} $x \in \cX$ and labels $y \in \cY$. We make no assumptions about $\cX$ or $\cY$ (e.g. $\cY$ could be a discrete set in a multi-class classification problem, or we could have $\cY = \mathbb{R}$ in a regression problem). We assume that the model $f$ outputs a \emph{confidence score} in $[0,1]$ for each possible label $\hat y \in \cY$: in other words, $f:\cX\rightarrow [0,1]^{\cY}$, and for each $\hat y \in \cY$, we write $f_{\hat y}(x) \in [0,1]$ for the confidence score that $f$ assigns to label $\hat y$ given input $x$. Such models are often used to make point predictions by predicting the label $\hat y = \arg\max_{y \in \cY} f_y(x)$ on input $x$ --- but we will interact with such models $f$ only at the level of confidence score predictions. 

A model $f$ derived from a \emph{training process} that did not have direct access to $\cD$, but rather to a finite sample $\Dp$ called the training set. The training process correlates $f$ with $\Dp$. A membership inference attack is a hypothesis test that must use a test statistic derived only from $f$ that aims to determine whether a labeled example $(x,y)$ is a member of the training set $\Dp$ or not. Formally, we model this as a hypothesis test that aims to solve the following simple hypothesis testing problem:

$$H_0: (x,y) \sim \cD \ \ \ \ \ \ \ H_1: (x,y) \sim \Dp$$

Here $(x,y) \sim \Dp$ denotes sampling a point $(x,y)$ uniformly at random from $\Dp$. Observe that since we derive the test statistic from $f$, even if $\Dp$ was itself sampled i.i.d. from $\cD$, $H_0$ and $H_1$ are distinct hypotheses since the training process has correlated $f$ with $\Dp$. In particular, we will base our attack on the presumption that $f$ will tend to be over-confident on examples $(x,y) \in \Dp$. Towards this, we choose as our test statistic $s(x,y)=z(x)_y -\max_{y'\neq y} z(x)_{y'}$ the logit difference between the true label and its most likely alternative, where $z(x)$ denotes the logits (unnormalized features before the softmax nonlinearity) of the model. This choice follows the scoring rule used in \cite{carlini2022membership} and will be useful for experimental comparisons. However, the remainder of our theoretical treatment will be agnostic as to this choice. 

In this paper we restrict attention to membership inference attacks (hypothesis tests) that apply a threshold function to $s(x,y)$, with a threshold that may depend on $x$. Given a function $q:\cX \rightarrow \mathbb{R}$ that maps examples $x$ to thresholds, the corresponding membership inference attack is given by:

$$\cA_q(x,y) = \begin{cases} \top\ \  (x \sim   \cD) & \text{if}\, s(x,y) < q(x) \\
    \bot \ \ (x \sim \Dp) & \text{if}\,  s(x,y)  \geq q(x).\end{cases}$$

Here $\bot$ is shorthand for ``We reject the null hypothesis that $(x,y) \sim \cD$ (and thus accuse $(x,y)$ of being in the training set)'', and $\top$ is shorthand for ``we do not reject the null hypothesis (and thus do not accuse $(x,y)$ of being in the training set)''.

A natural baseline is to set $q(x,y) = \tau$ to be a constant. This is the attack proposed by \cite{yeom2018privacy}, and we write this baseline as $\cA_\tau$.  If $\tau$ is set to be a $1-\alpha$ quantile of the marginal distribution on $s(x,y)$ when $(x,y) \sim \cD$, then this attack can easily be seen to have \emph{false positive rate} $\alpha$. Below we define quantiles assuming (for simplicity) that the distribution in question is continuous---but it is also possible to define quantiles without this assumption. 

\begin{definition}
Fix a continuous distribution $\cP \in \Delta \mathbb{R}$. A number $\tau \in \mathbb{R}$ is a $(1-\alpha)$-quantile of $\cP$ if:
$$\Pr_{s \sim \cP}[s \leq \tau] = 1-\alpha$$
\end{definition}

We can evaluate the performance of a membership inference attack by evaluating its false positive rate, true positive rates, and precision\footnote{Precision is equivalent to the accuracy of the attack conditioned on a positive prediction $\bot$ when $\Pr[\bot]=0.5$}:

\begin{definition} 
Fix an arbitrary membership inference attack $\cA:\cX\times \cY \rightarrow \{\top,\bot\}$. We define the following performance metrics 
$$\textrm{FPR}(\cA) = \Pr_{(x,y) \sim \cD}[\cA(x,y) = \bot],$$
$$\textrm{TPR}(\cA) = \Pr_{(x,y) \sim \Dp}[\cA(x,y) = \bot],$$
$$\textrm{Prec}(\cA) = \frac{\textrm{FPR}(\cA)}{\textrm{FPR}(\cA)+\textrm{TPR}(\cA)}.$$
\end{definition}

It is immediate that the baseline membership inference attack achieves its target false positive rate; the true positive rate and precision of the attack can be evaluated empirically:

\begin{lemma}
    Let $\tau$ be a $1-\alpha$ quantile of $\cP$, the distribution on confidence scores $s(x,y)$ that results from sampling $(x,y) \sim \cD$. Then the baseline membership inference attack $\cA_\tau$ has $\textrm{FPR}(\cA_\tau)=\alpha$.
\end{lemma}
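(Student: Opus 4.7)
The plan is to unwind the definitions and apply the quantile condition directly; there is no substantive obstacle here, since the lemma is essentially a restatement of what it means for $\tau$ to be a $(1-\alpha)$-quantile.

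First I would rewrite the FPR using the definition of $\cA_\tau$. By construction, $\cA_\tau(x,y) = \bot$ precisely when $s(x,y) \geq \tau$, so
\[
\textrm{FPR}(\cA_\tau) \;=\; \Pr_{(x,y)\sim\cD}[\cA_\tau(x,y)=\bot] \;=\; \Pr_{(x,y)\sim\cD}[s(x,y) \geq \tau].
\]
Next I would use the fact that $\cP$ is defined as the pushforward distribution of $s(x,y)$ under $(x,y)\sim\cD$, which lets me rewrite the right-hand side as $\Pr_{s\sim\cP}[s \geq \tau]$.

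Finally I would apply the quantile hypothesis. Since $\tau$ is a $(1-\alpha)$-quantile of $\cP$, Definition~1 gives $\Pr_{s\sim\cP}[s \leq \tau] = 1-\alpha$, hence $\Pr_{s\sim\cP}[s > \tau] = \alpha$. Because $\cP$ is continuous (as assumed in the definition), $\Pr_{s\sim\cP}[s = \tau] = 0$, so $\Pr_{s\sim\cP}[s \geq \tau] = \Pr_{s\sim\cP}[s > \tau] = \alpha$, yielding $\textrm{FPR}(\cA_\tau) = \alpha$.

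The only subtlety worth flagging is the continuity assumption: without it the event $\{s=\tau\}$ could carry positive mass and the conclusion would only give $\textrm{FPR}(\cA_\tau) \leq \alpha$, which is why the paper restricts the definition of quantile to the continuous case. Since the statement of the lemma inherits this assumption via the definition of $(1-\alpha)$-quantile, no extra work is required.
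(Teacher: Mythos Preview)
Your proposal is correct and follows exactly the same approach as the paper: unfold the definition of $\textrm{FPR}$, rewrite the event $\cA_\tau(x,y)=\bot$ as $s(x,y)\geq\tau$, and apply the quantile condition. The paper's proof is in fact a one-line version of what you wrote, and your additional remark about continuity handling the boundary event $\{s=\tau\}$ is a welcome clarification.
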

\begin{proof}
    This follows from the definitions:
        $\textrm{FPR}(\cA_\tau) = \Pr_{(x,y) \sim \cD}[s(x,y) \geq \tau] = \alpha$.
\end{proof}

\section{Our Attack}
\label{sec:our_attack}
Our attack is $\cA_q(x,y)$, where $q$ is derived from a \emph{quantile regression} model trained to predict quantiles of our test statistic $s(x,y)$ on a dataset of points $(x,y)$ drawn from our null hypothesis distribution $(x,y) \sim \cD$. A popular non-parametric quantile regression method is to minimize \emph{pinball loss}, which elicits \emph{quantiles} (just as squared loss elicits means):

\begin{definition}
The pinball loss function defined for a $1-\alpha$ quantile is:
$$\textrm{PB}_{1-\alpha}(\hat y, y) = \max\{\alpha (\hat y-y), (1-\alpha)(y-\hat y)\}$$
\end{definition}
Pinball loss is a useful objective function because it elicits quantiles:
\begin{lemma}
Fix any distribution $\cP \in \Delta \mathbb{R}$. Let:
$$\tau \in \arg\min_{\hat y \in \mathbb{R}}\E_{y \sim \cP}[\textrm{PB}_{1-\alpha}(\hat y, y)]$$
Then $\tau$ is a $(1-\alpha)$-quantile of $\cP$. 
\end{lemma}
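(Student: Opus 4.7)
The plan is to compute the expected pinball loss as an explicit function of $\hat y$, differentiate, and verify the minimizer satisfies the quantile equation. Concretely, define $L(\hat y) = \E_{y\sim\cP}[\textrm{PB}_{1-\alpha}(\hat y, y)]$ and split the expectation based on the sign of $\hat y - y$: when $\hat y \geq y$ the max in the pinball loss is attained by the first term, and when $\hat y < y$ it is attained by the second term. Writing $F$ for the CDF of $\cP$ (continuous by assumption), this gives
\begin{equation*}
L(\hat y) = \alpha \int_{-\infty}^{\hat y} (\hat y - y)\, dF(y) + (1-\alpha) \int_{\hat y}^{\infty} (y - \hat y)\, dF(y).
\end{equation*}

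Next I would differentiate $L$ with respect to $\hat y$. The contributions from moving the limits of integration cancel because the integrands vanish at $y = \hat y$, so Leibniz's rule yields
\begin{equation*}
L'(\hat y) = \alpha F(\hat y) - (1-\alpha)\bigl(1 - F(\hat y)\bigr) = F(\hat y) - (1-\alpha).
\end{equation*}
Setting $L'(\hat y) = 0$ gives $F(\tau) = 1-\alpha$, i.e., $\Pr_{y\sim\cP}[y \leq \tau] = 1-\alpha$, which is precisely the definition of a $(1-\alpha)$-quantile of $\cP$. To conclude that any minimizer (not merely a stationary point) is such a quantile, I would note that $L$ is convex in $\hat y$ (it is an expectation of a pointwise maximum of two affine functions of $\hat y$), so $L'$ is nondecreasing and every minimizer is a zero of $L'$.

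The only subtlety I anticipate is justifying the differentiation under the integral sign and handling the possibility that $F$ is flat at the level $1-\alpha$ (in which case there is a whole interval of quantiles and of minimizers, but each still satisfies the defining equation). Both issues are handled cleanly under the continuity assumption on $\cP$ stated in the preceding definition; if one wanted to drop continuity, one could instead give a direct comparison argument, computing $L(\hat y') - L(\hat y)$ for $\hat y' > \hat y$ and showing its sign is controlled by whether $F(\hat y)$ lies above or below $1-\alpha$, but that refinement is unnecessary here.
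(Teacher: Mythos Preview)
Your argument is correct and is the standard derivation: split the expectation at $\hat y$, differentiate using Leibniz's rule (the boundary terms vanish), and use convexity to conclude that every minimizer satisfies $F(\tau)=1-\alpha$. The paper itself does not supply a proof of this lemma; it is stated as a known property of the pinball loss and used as background for the subsequent theorems, so there is no paper-side argument to compare against. Your treatment is exactly what one would expect for a self-contained justification, and your remarks about the flat-$F$ case and the non-continuous alternative are appropriate caveats.
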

Viewed through this lens, the baseline attack can be thought of as the end result of the following simple pipeline:
\begin{enumerate}
    \item Select a target false positive rate $\alpha$,
    \item Choose a threshold $\tau$ by solving the minimization problem $$\tau \in \arg\min_{\tau' \in \mathbb{R}}\E_{(x,y) \sim \cD}[\textrm{PB}_{1-\alpha}(\tau', s(x,y))]$$
    \item Instantiate the baseline membership inference attack $\cA_\tau$.
\end{enumerate}

Our attack departs from this baseline attack by training a model $q:\cX \rightarrow \mathbb{R}$ on feature/confidence score pairs to optimize pinball loss, rather than a single threshold $\tau$:
\begin{enumerate}
    \item Select a target false positive rate $\alpha$ and a class of model architectures $\cH$ consisting of models $q:\cX \rightarrow \mathbb{R}$.
    \item Train a model $q \in \cH$ by solving the following risk minimization problem:
    \begin{equation}
        q \in \arg\min_{q' \in \cH}\E_{(x,y) \sim \cD}[\textrm{PB}_{1-\alpha}(q(x), s(x,y))]
        \label{eq:pinball_objective}
    \end{equation}
    \item Instantiate the membership inference attack $\cA_q$
\end{enumerate}

We train our quantile regression model on a dataset consisting of points $(x,y)$ drawn from the underlying distribution (of points \emph{not} used in training), labeled by the confidence scores $s(x,y)$ derived from the model. Thus our attack assumes only that we have API access to the model under attack $f$, and are able to query it on a finite set of points to obtain confidence scores. 

We now establish some basic properties of our attack. The first is that, like the baseline attack, it actually achieves its target false positive rate. Unlike the baseline attack, this is no longer immediate, but can be derived from properties of the pinball loss: 

\begin{thm}
Fix a distribution $\cD \in \Delta(\cX \times \cY)$ over labeled examples and a model $f$. Suppose that the marginal distribution over $s(x,y)$ for $(x,y) \sim \cD$ is continuous. Let $\cH$ be any class of models that is closed under additive shifts --- i.e. such that for each $q \in \cH$ and $\Delta \in \mathbb{R}$, then we also have $q' \in \cH$ for $q'(x) = q(x) + \Delta$. Then for the membership inference attack $\cA_q$ produced by our method, $\textrm{FPR}(\cA_q) = \alpha$.
\end{thm}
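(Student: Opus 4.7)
The plan is to reduce the problem to the one-dimensional quantile elicitation lemma by exploiting closure of $\cH$ under additive shifts. Define
$$g(\Delta) = \E_{(x,y)\sim\cD}[\textrm{PB}_{1-\alpha}(q(x)+\Delta,\, s(x,y))].$$
Because $\cH$ is closed under additive shifts, the function $q_\Delta(x) = q(x)+\Delta$ lies in $\cH$ for every $\Delta \in \mathbb{R}$. Since $q$ solves the risk minimization problem (\ref{eq:pinball_objective}) over $\cH$, the scalar function $g$ is minimized at $\Delta = 0$.

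Next, I would rewrite $g$ as the expected pinball loss for a single univariate random variable. Setting $U = s(x,y) - q(x)$ with $(x,y) \sim \cD$, a direct computation gives
$$\textrm{PB}_{1-\alpha}(q(x)+\Delta, s(x,y)) = \max\{\alpha(\Delta - U),\, (1-\alpha)(U - \Delta)\} = \textrm{PB}_{1-\alpha}(\Delta, U),$$
so $g(\Delta) = \E[\textrm{PB}_{1-\alpha}(\Delta, U)]$. The fact that $g$ is minimized at $\Delta = 0$ then says exactly that $0$ is an expected-pinball-loss minimizer for the univariate distribution of $U$. Applying the earlier lemma that pinball-loss minimizers are $(1-\alpha)$-quantiles, $0$ is a $(1-\alpha)$-quantile of $U$, i.e.\ $\Pr[U \le 0] = 1-\alpha$.

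Finally, I would translate this back: $\textrm{FPR}(\cA_q) = \Pr_{(x,y)\sim\cD}[s(x,y) \geq q(x)] = \Pr[U \geq 0]$, and continuity (which rules out a mass at $0$) upgrades $\Pr[U \le 0] = 1-\alpha$ to $\Pr[U \ge 0] = \alpha$, giving the claim.

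\paragraph{Main obstacle.} The delicate point is the continuity step: the hypothesis only asserts continuity of the marginal distribution of $s(x,y)$, whereas the argument really needs $\Pr[s(x,y) = q(x)] = 0$, i.e.\ continuity of $U = s(x,y) - q(x)$ at $0$. Absent this, the pinball-loss minimizer is only guaranteed to satisfy a subgradient inclusion $0 \in \partial g(0)$, which allows a weaker two-sided bound $\Pr[U < 0] \le 1-\alpha \le \Pr[U \le 0]$ rather than equality. I would either (i) note that standard quantile-regression arguments allow one to assume $\Pr[U=0]=0$ when the marginal of $s(x,y)$ is continuous and $q$ is sufficiently regular, or (ii) strengthen the hypothesis to continuity of $s(x,y) - q(x)$; in either case the remainder of the argument is immediate from the additive-shift reduction.
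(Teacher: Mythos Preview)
Your proposal is correct and follows essentially the same approach as the paper: both exploit closure of $\cH$ under additive shifts to reduce the question to a one-dimensional pinball-loss minimization, then invoke the quantile-elicitation property of pinball loss (the paper packages this as a proof by contradiction via a quantitative shift-improvement lemma, whereas you apply Lemma~2 directly, but the content is identical). Your flagging of the continuity subtlety---that the argument really needs $\Pr[s(x,y)=q(x)]=0$, not merely continuity of the marginal of $s(x,y)$---is apt, and the paper's own proof glosses over this same point when it asserts that a suitable $\Delta$ exists ``by continuity of the distribution on $s(x,y)$.''
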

We defer the proof to Appendix \ref{proof:theorem1}. Thus by varying $\alpha$, we can use our attack to sweep out a curve trading off our target false positive rate with our (empirically measured) true positive rate, just as we can for the baseline attack $\cA_\tau$. 

Is this guarantee stronger than the baseline attack, and if so, in what sense?  To give one perspective on this, it will be helpful to define group conditional quantile consistency, which is related to  multicalibration, a concept originating from the fairness in machine learning literature \citep{hebert2018multicalibration,gupta2022online,bastani2022practical,jung2022batch,noarov2023scope}.
\begin{definition}
Fix a collection $\cG$ of group indicator functions $g:\cX\rightarrow \{0,1\}$  and a model $q:\cX\rightarrow \mathbb{R}$. $q$ satisfies group conditional quantile consistency with respect to a distribution $\cP \in \Delta(\cX\times \mathbb{R})$, a target quantile $1-\alpha$, and the collection of groups $\cG$ if for every $g \in \cG$:
$$\Pr_{(x,s) \sim \cP}[q(x) \leq s| g(x) = 1 ]=1-\alpha$$
\end{definition}

Group conditional quantile consistency asks that our quantile predictions be correct not just marginally over the data, but also (simultaneously) conditionally on membership in a large number of potentially intersecting groups. If we optimize pinball loss over a richer set of models (that are closed under shifts by a class of group indicator functions, rather than just constant functions), then our attack will achieve its target false positive rate even when conditioning on membership in each of the groups specified by the functions $g \in \cG$, rather than just marginally. This is a stronger guarantee, as a marginal false positive rate on its own need not hold subject to additional conditioning events. 

\begin{thm}
    Fix a distribution $\cD \in \Delta(\cX\times \cY)$ over labeled examples and a model $f$. Fix a collection of group indicator functions $\cG$ and a class of models $\cH$ such that:
    \begin{enumerate}
        \item $\cH$ is closed under shifts from $\cG$: for every $h \in \cH$, $g \in \cG$, and $\eta \in \mathbb{R}$, the function $h'(x) = h(x) + \eta g(x)$ is such that $h' \in \cH$. 
        \item The conditional distribution over $s(x,y)$ for $(x,y) \sim \cD$, conditional on $g(x) = 1$ is continuous for all $g \in \cG$. 
    \end{enumerate}
    Then for the membership inference attack $\cA_q$ produced by our method, its false positive rate is $1-\alpha$ conditional on membership in each group $g \in \cG$: $\Pr_{(x,y) \sim \cD}[\cA_q(x,y) = \bot | g(x) = 1] = 1-\alpha$.
\end{thm}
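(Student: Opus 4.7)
The plan is to mirror the proof of Theorem 1 (deferred to the appendix) but exploit the richer closure property of $\cH$: since $\cH$ is closed not only under additive constants but under arbitrary shifts by each $g \in \cG$, the minimizer $q$ of $\E_{(x,y)\sim\cD}[\textrm{PB}_{1-\alpha}(q'(x), s(x,y))]$ must satisfy a separate first-order optimality condition along every such direction. Each of these conditions will pin down a conditional quantile, and converting the quantile statement through the decision rule of $\cA_q$ will deliver the claimed group-conditional false positive rate.

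Concretely, fix $g \in \cG$ and define the scalar function
\[
F_g(\eta) := \E_{(x,y)\sim\cD}\bigl[\textrm{PB}_{1-\alpha}(q(x) + \eta g(x),\, s(x,y))\bigr].
\]
Closure under shifts by $g$ implies $0 \in \arg\min_\eta F_g(\eta)$. Splitting the outer expectation according to $g(x)$, the $\{g(x)=0\}$ piece is constant in $\eta$ and drops out, while on $\{g(x)=1\}$ the perturbation is a pure translation of the predictor. Hypothesis 2 (continuity of the conditional distribution of $s(x,y)$ given $g(x)=1$) is exactly what is needed to push the one-sided subderivatives of the piecewise-linear pinball loss through the expectation: the kink set $\{s(x,y) = q(x)\}$ carries conditional measure zero, so $F_g$ is differentiable in a neighborhood of $0$ and $F_g'(0)$ is a clean expression in the conditional probabilities $\Pr[s(x,y) \lessgtr q(x) \mid g(x)=1]$.

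Setting $F_g'(0) = 0$ and using the continuity-based partition yields precisely the group conditional quantile consistency condition defined immediately before the theorem, applied with target quantile $1-\alpha$ and groups $\cG$ to the joint distribution of $(x, s(x,y))$ induced by $(x,y) \sim \cD$. Since by definition $\cA_q(x,y) = \bot$ is the event $s(x,y) \geq q(x)$, equivalently $q(x) \leq s(x,y)$, reading off the definition of group conditional quantile consistency for this event gives $\Pr_{(x,y)\sim\cD}[\cA_q(x,y) = \bot \mid g(x) = 1] = 1-\alpha$, as required. The main obstacle is the one I already flagged: justifying differentiation through the pinball kink. This is precisely where hypothesis 2 is doing work, and the argument must make clear that without conditional continuity the first-order condition collapses only to a subgradient inclusion and need not uniquely pin the conditional quantile.
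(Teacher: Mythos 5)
Your proof takes a genuinely different route from the paper's. The paper argues by contradiction: it assumes the group-conditional false positive rate is some $\alpha' \neq \alpha$, invokes the quantitative shift lemma (Lemma~\ref{lem:shift}, which asserts a strict pinball-loss improvement of at least $(\alpha-\alpha')^2/2\rho$ after the coverage-correcting shift by $\eta g(x)$), and then splits the pinball loss over $\{g(x)=1\}$ and $\{g(x)=0\}$ to show the shifted $q'$ has strictly lower loss than $q$ over all of $\cD$, contradicting optimality. You instead argue forward from a first-order condition: $\eta = 0$ minimizes the convex scalar function $F_g(\eta)$, so $0 \in \partial F_g(0)$; under the conditional-continuity hypothesis the kink set $\{s(x,y) = q(x)\}$ carries zero conditional mass, making $F_g$ differentiable at $0$; the resulting stationarity equation $\alpha \Pr[s < q(x) \mid g] = (1-\alpha)\Pr[s > q(x) \mid g]$ pins the conditional quantile. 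Both proofs exploit closure under $g$-shifts and the same decomposition by $g(x)$; the key difference is that the paper uses the quantitative Lemma~\ref{lem:shift} (which technically presupposes a bounded density $\rho$, a hypothesis not in the theorem statement) while yours only needs the qualitative no-atoms condition, which is a modest cleanup. The price you pay is the interchange of differentiation and expectation, which you correctly flag as the step where the continuity hypothesis is load-bearing; a careful write-up should invoke the convexity of the integrand and the fact that its subdifferential is single-valued off a conditional-null set (so the subdifferential of the expectation collapses to a singleton).

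One caution: if you actually carry out the stationarity computation, it yields $\Pr[s(x,y) \le q(x) \mid g(x)=1] = 1-\alpha$, equivalently $\Pr[q(x) \le s(x,y) \mid g(x)=1] = \alpha$ and hence conditional FPR $=\alpha$. Your proposal asserts that the FOC ``yields precisely the group conditional quantile consistency condition defined immediately before the theorem,'' which is written as $\Pr[q(x) \le s \mid g] = 1-\alpha$, and then reads off a conditional FPR of $1-\alpha$. That matches the theorem statement as printed, but the paper's own proof in Appendix~\ref{proof:theorem2} derives a contradiction precisely from $\alpha' \neq \alpha$ and therefore establishes conditional FPR $=\alpha$; the $1-\alpha$ in both the definition and the theorem statement appears to be a consistent sign typo in the paper. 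You should not let that typo propagate into your proof: the FOC pins $\alpha$, not $1-\alpha$, and asserting otherwise is the one place your argument, as written, is formally incorrect even though it agrees with the (erroneous) printed statement.
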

We defer the proof to Appendix \ref{proof:theorem2}.

\section{Experiments}
We provide experimental results on classification tasks on both image data and tabular data.

\subsection{Image Classification Experiments}

We evaluate the effectiveness of our proposed approach on three image classification datasets: CIFAR-10 \citep{krizhevsky2009learning}, a standard image classification dataset with 10 target classes, CIFAR-100 \citep{krizhevsky2009learning} another image classification dataset with 100 target classes, and ImageNet-1k \citep{imagenet15russakovsky}, a substantially larger image classification task with 1000 target classes. 
To provide a realistic evaluation, we ensure our base models use common, well-performing architectures and follow standard guidelines for hyperparameter selection \citep{he2015deep}, including data augmentation, learning rate schedule, and weight decay. For CIFAR-10 and CIFAR-100, target (classification) models include ResNet-10, ResNet-18, ResNet-34, and ResNet-50 \citep{he2015deep}. For ImageNet-1k, the target model is a ResNet-50. In all experiments, 50\% of the dataset is used for training the target model, and, following the common standards, the resolution of the target model is 32x32 for CIFAR datasets, and 224x224 for the ImageNet-1k dataset. The accuracy of each target model is presented in Appendix \ref{app:target_accuracies}.

To perform our membership inference attack, we train a single quantile regression model following our proposal in Eq.\eqref{eq:pinball_objective}. One of the advantages of our attack is that it is model-agnostic: since it does not require knowledge of the model architecture of the target, we use the same model architecture for our quantile regression model in all settings:
a pretrained ConvNext-Tiny model \citep{liu2022convnet}. 
On ImageNet-1k, we train a quantile regression model to predict quantiles at $\alpha\in [0.9996, 1]$.

This scoring rule is closely related to the logit function of the model's confidence $s(x,y) = \log(f(x)_y)- \log(1-f(x)_y)$ for models with high label confidence, and has been empirically shown to be approximately normally distributed. An alternative is to learn a parameterized model so that, for each sample, the model predicts the mean $\mu(x)$ and the log of the standard deviation $\log\sigma(x)$, and the quantiles of a sample can be generated from the Gaussian distribution 
$\mathcal{N} (\mu(x), (e^{\log\sigma(x)})^2)$.
Due to the fact that CIFAR datasets are much smaller ($25000$ samples available for training), rather than directly learning quantiles, we opt to learn the distributional parameters on CIFAR.

All images are processed at 224x224 resolution  by the quantile model, which is trained on the remaining 50\% of the training samples that were not used to train the target model. Since there is a smaller body of literature on stable hyperparameters for regression models, we use Ray Tune \citep{liaw2018tune} for hyperparameter tuning (tuning is used to minimize validation pinball loss in a held out dataset). The compute budget for our attack was approximately 30 GPU minutes per quantile regression attack (4 hours including hyperparameter optimization) on CIFAR-10/CIFAR-100, and 16 hours (128 hours including hyperparameter optimization) on ImageNet-1k. Final hyperparameters were found to be consistent across all $3$ tasks; more information is provided in  Appendix \ref{app:hyperparameters}

Figure \ref{fig:mia-imagenet1k} shows TPR vs FPR of our proposed approach on ImageNet-1k; FPR is computed on a held-out dataset that was not used to train the target or the quantile regression model.  Both the marginal quantile approach from \cite{yeom2018privacy} and the shadow model approach LIRA from \cite{carlini2022membership} are also shown for reference. In these experiments, our quantile regression approach dominates the shadow model approach at all comparison points. Appendix \ref{app:additional_results} shows the same comparison against all CIFAR-10/100 target models (ResNet-10, ResNet18, ResNet34, ResNet-50). In these experiments, we outperform the marginal baseline but fall short of the shadow model approach. We note that in this case, the shadow models actually produce thresholds that have lower pinball loss than our quantile regression algorithm, suggesting that on the smaller CIFAR datasets, our optimization heuristic was unable to sufficiently minimize test pinball loss. 

\begin{table}[ht]
\centering
\caption{Precision of all membership inference attack at 1\% and 0.1\% false positive rates on ResNet-50 architectures. Our attack consistently dominates the marginal baseline and produces excellent results on ImageNet-1k compared to shadow model approaches. We do not beat the shadow model approach on the smaller CIFAR datasets, potentially owing to their small dataset size. Additional results for the remaining architectures are presented in Appendix \ref{app:additional_results} }
\label{tab:precision-fpr-table}
\begin{tabular}{|l|rrr|rrr|}
\hline
 & \multicolumn{3}{|c|}{Precision @ $1\%$ FPR}&\multicolumn{3}{|c|}{Precision @ $0.1\%$ FPR}                     \\ \hline
\multicolumn{1}{|l|}{Method} & C-10& C-100 & IN-1k & C-10 & C-100 & IN-1k\\ \hline
Marginal    & 48.56\%   &58.81\%   & 47.62\%   & 60.94\%   & 65.75\%   & 46.81\%\\
LIRA (n=2)  & 78.55\%   &95.21\%   & 62.70\%   & 83.18\%   & 98.65\%   & 56.04\% \\
LIRA (n=4)  & 80.52\%   &95.87\%   & 89.11\%   & 91.48\%   & 98.94\%   & 95.18\% \\
LIRA (n=6)  & 83.19\%   &96.20\%   & 93.74\%   & 93.17\%   & 99.02\%   & 98.38\% \\
LIRA (n=8)  & 83.00\%   &96.07\%   & 94.57\%   & 93.70\%    & 98.98\%  & 98.73\% \\ \hline
Ours        & 62.95\%   &79.57\%   & 97.45\%   & 64.48\%    & 85.41\%  & 99.64\%
\\ \hline
\end{tabular}
\end{table}

Table \ref{tab:precision-fpr-table} shows true positive rate of the proposed membership inference attack at 1\% and 0.1\% false positive rate on the ResNet-50 target networks (additional results shown in Appendix \ref{app:additional_results}). Our attack robustly predicts membership in the private dataset with high precision even at low FPR. The proposed approach works on all target architectures and datasets, but works particularly well on the more complex and data intensive ImageNet-1k task.

Since LIRA produces an explicit score distribution $\mathcal{N}(s(x,y); \mu(x,y), \sigma(x,y))$ based on the shadow model's predictions, we can compare all $3$ methods (Ours, LIRA, marginal baseline) in terms of pinball loss on public data ($x \sim \cD$). \textbf{We find that the attack with the smallest pinball loss on public data is the better membership inference attack across all datasets.} This shows that a strong quantile predictor on public data is a strong membership inference attack; which validates the core premise of our approach in Section \ref{sec:our_attack}. 

A possible explanation for the relative lack of success of directly learning a quantile regression model on CIFAR datasets could be the relatively low number of available samples (25000). In such low data scenarios, training shadow models is also computationally affordable. The opposite holds true for the much larger ImageNet1k dataset, on which the computational cost of training a single target model, much less multiple shadow models far exceeds the cost of a single quantile regression model.

\subsection{Tabular Classification Experiments}
In addition to experiments on image datasets, we here demonstrate the effectiveness of our membership inference attack on tabular datasets, including large datasets from derived from the US Census' American Community Survey (ACS) \citep{ding2021retiring} and small ones from OpenML \footnote{\href{https://www.openml.org}{https://www.openml.org}} \citep{grinsztajn2022tree}. Gradient boosting with decision trees is widely-used for classification tasks with tabular data, so in our experiments, we train a gradient boosting model with 5-fold cross validation for hyperparameter tuning on the private portion of the data. For our attack model, gradient boosting with regression trees 
is applied, but now with regression targets as mentioned above. Hyperparameters for regression tasks are also tuned using a public portion of the data to avoid overfitting. In our experiments, catboost is used for model training, and Optuna \citep{optuna_2019} is used for hyperparameter tuning.

Table \ref{tab:openml-precision-fpr-table} shows that our attack, which involves learning a single regression model, performs on par with the LiRA attack, which requires learning at least 16 models on some tasks and more on other tasks. Since each model, including our regression model and a shadow model in LiRA, has the same latency in terms of hyperparameter tuning and model training, our attack requires significantly less compute (equivalent to a single shadow model), and it reduces a successful attack from training many models to only one model. Figure \ref{fig:mia_attack_tabular} shows TPR vs FPR of our proposed approach on OpenML.

\begin{table}[ht]
\caption{Precision @ 1\% FPR on OpenML datasets and 0.5\% on ACS datasets. Evaluating at different FPR levels is due to the fact that OpenML datasets have around 5,000 samples, and ACS datasets have around 20,000 samples. $n$ here denotes the number of shadow models trained for LIRA.}
\resizebox{\textwidth}{!}{%
\begin{tabular}{|c|cccc|cccc|}
\hline
            & \multicolumn{4}{c|}{Precision @ 1\% FPR (OpenML)} & \multicolumn{4}{c|}{Precision @ 0.5\% FPR (ACS NY)} \\ \hline
            & 361057     & 361064     & 361067     & 361070     & Coverage    & Income     & Travel     & Mobility    \\ \hline
LIRA (n=16) & 70.33\%    & 85.44\%    & 85.52\%    & 73.33\%    & 66.07\%     & 50.71\%    & 66.07\%    & 72.74\%     \\
LIRA (n=32) & 76.22\%    & 88.52\%    & 89.62\%    & 78.35\%    & 66.28\%     & 52.53\%    & 67.52\%    & 68.84\%     \\
LIRA (n=64) & 82.73\%    & 90.31\%    & 89.46\%    & 79.57\%    & 69.23\%     & 50.24\%    & 65.64\%    & 65.26\%     \\ \hline
Ours        & 83.35\%    & 88.05\%    & 87.35\%    & 86.54\%    & 67.31\%     & 56.35\%    & 63.98\%    & 85.27\%     \\ \hline
\end{tabular}
}
\label{tab:openml-precision-fpr-table}
\end{table}

\begin{figure}[ht!]
    \centering
    \begin{subfigure}[b]{1.\textwidth}
    \centering
    \includegraphics[width=1.\textwidth]{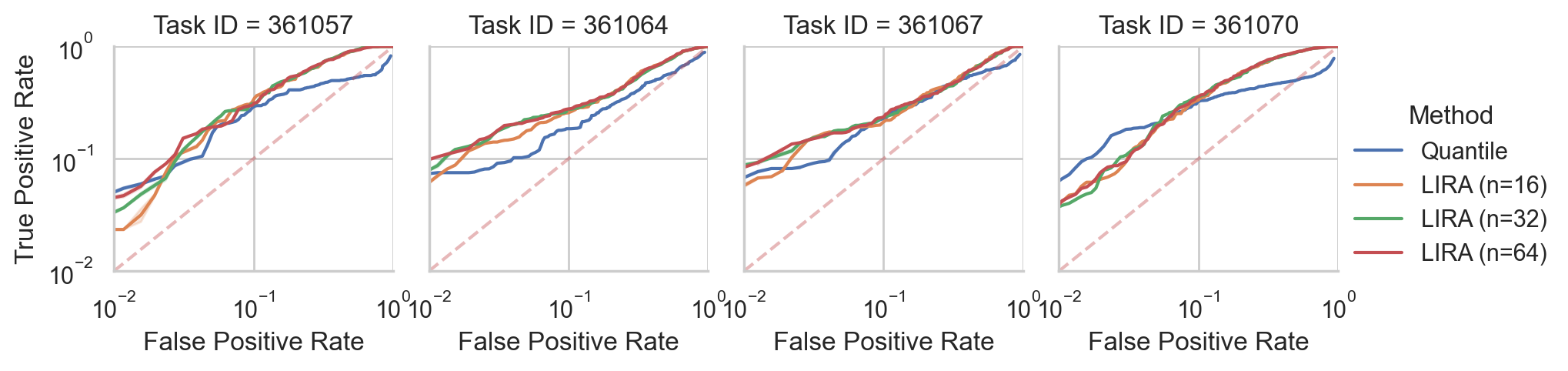}  
    \caption{OpenML Datasets }
    \end{subfigure}
        \begin{subfigure}[b]{1.\textwidth}
    \centering
    \includegraphics[width=1.\textwidth]{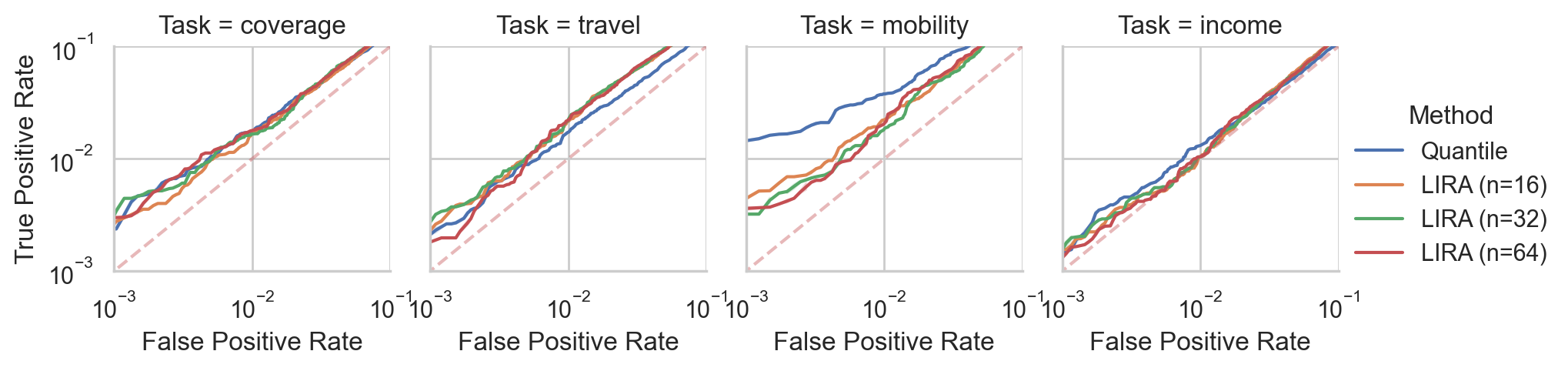}  
    \caption{ACS Task on NY data}
    \end{subfigure}

    \caption{
    Comparing the true positive rate vs. false positive rate of  our membership inference attack against LIRA evaluated at 16, 32, and 64 shadow models on OpenML and ACS datasets. A single quantile regression model can produce similar results as multiple shadow models at a fraction of the compute cost for gradient boosting models.}
    \label{fig:mia_attack_tabular}
\end{figure}

\section{Discussion}
We have introduced a new family of membership inference attacks that are competitive with the state of the art (and in our ImageNet experiments, substantially and uniformly better), while requiring substantially fewer computational resources and less knowledge of the target model. Moreover, we have identified pinball loss as a key target objective: uniformly across all of our experiments, the methods that produce thresholds minimizing pinball loss are the most effective attacks. Together, this brings membership inference closer to practicality on large commercial models. This serves to highlight a growing risk to privacy---but also provides a more efficient means to audit models by subjecting them to our attacks. We hope that our methods encourage and enable a more widespread practice of auditing models for privacy violations by subjecting them to membership inference attacks before deployment.

\clearpage
\bibliographystyle{abbrvnat}

\bibliography{cites}

\begin{thebibliography}{33}
\providecommand{\natexlab}[1]{#1}
\providecommand{\url}[1]{\texttt{#1}}
\expandafter\ifx\csname urlstyle\endcsname\relax
  \providecommand{\doi}[1]{doi: #1}\else
  \providecommand{\doi}{doi: \begingroup \urlstyle{rm}\Url}\fi

\bibitem[Akiba et~al.(2019)Akiba, Sano, Yanase, Ohta, and Koyama]{optuna_2019}
T.~Akiba, S.~Sano, T.~Yanase, T.~Ohta, and M.~Koyama.
\newblock Optuna: A next-generation hyperparameter optimization framework.
\newblock In \emph{Proceedings of the 25th {ACM} {SIGKDD} International
  Conference on Knowledge Discovery and Data Mining}, 2019.

\bibitem[Bastani et~al.(2022)Bastani, Gupta, Jung, Noarov, Ramalingam, and
  Roth]{bastani2022practical}
O.~Bastani, V.~Gupta, C.~Jung, G.~Noarov, R.~Ramalingam, and A.~Roth.
\newblock Practical adversarial multivalid conformal prediction.
\newblock In \emph{Neural Information Processing Systems (NeurIPS)}, 2022.

\bibitem[Bergstra et~al.(2013)Bergstra, Yamins, and Cox]{bergstra2013making}
J.~Bergstra, D.~Yamins, and D.~Cox.
\newblock Making a science of model search: Hyperparameter optimization in
  hundreds of dimensions for vision architectures.
\newblock In \emph{International conference on machine learning}, pages
  115--123. PMLR, 2013.

\bibitem[Carlini et~al.(2022)Carlini, Chien, Nasr, Song, Terzis, and
  Tramer]{carlini2022membership}
N.~Carlini, S.~Chien, M.~Nasr, S.~Song, A.~Terzis, and F.~Tramer.
\newblock Membership inference attacks from first principles.
\newblock In \emph{2022 IEEE Symposium on Security and Privacy (SP)}, pages
  1897--1914. IEEE, 2022.

\bibitem[Ding et~al.(2021)Ding, Hardt, Miller, and Schmidt]{ding2021retiring}
F.~Ding, M.~Hardt, J.~Miller, and L.~Schmidt.
\newblock Retiring adult: New datasets for fair machine learning.
\newblock \emph{Advances in Neural Information Processing Systems}, 34, 2021.

\bibitem[Grinsztajn et~al.(2022)Grinsztajn, Oyallon, and
  Varoquaux]{grinsztajn2022tree}
L.~Grinsztajn, E.~Oyallon, and G.~Varoquaux.
\newblock Why do tree-based models still outperform deep learning on tabular
  data?
\newblock \emph{arXiv preprint arXiv:2207.08815}, 2022.

\bibitem[Gupta et~al.(2022)Gupta, Jung, Noarov, Pai, and Roth]{gupta2022online}
V.~Gupta, C.~Jung, G.~Noarov, M.~M. Pai, and A.~Roth.
\newblock Online multivalid learning: Means, moments, and prediction intervals.
\newblock In \emph{13th Innovations in Theoretical Computer Science Conference
  (ITCS 2022)}. Schloss Dagstuhl-Leibniz-Zentrum f{\"u}r Informatik, 2022.

\bibitem[He et~al.(2015)He, Zhang, Ren, and Sun]{he2015deep}
K.~He, X.~Zhang, S.~Ren, and J.~Sun.
\newblock Deep residual learning for image recognition. arxiv 2015.
\newblock \emph{arXiv preprint arXiv:1512.03385}, 14, 2015.

\bibitem[H{\'e}bert-Johnson et~al.(2018)H{\'e}bert-Johnson, Kim, Reingold, and
  Rothblum]{hebert2018multicalibration}
U.~H{\'e}bert-Johnson, M.~Kim, O.~Reingold, and G.~Rothblum.
\newblock Multicalibration: Calibration for the (computationally-identifiable)
  masses.
\newblock In \emph{International Conference on Machine Learning}, pages
  1939--1948. PMLR, 2018.

\bibitem[Homer et~al.(2008)Homer, Szelinger, Redman, Duggan, Tembe, Muehling,
  Pearson, Stephan, Nelson, and Craig]{Homer}
N.~Homer, S.~Szelinger, M.~Redman, D.~Duggan, W.~Tembe, J.~Muehling, J.~V.
  Pearson, D.~A. Stephan, S.~F. Nelson, and D.~W. Craig.
\newblock Resolving individuals contributing trace amounts of dna to highly
  complex mixtures using high-density snp genotyping microarrays.
\newblock \emph{PLOS Genetics}, 4\penalty0 (8):\penalty0 1--9, 08 2008.
\newblock \doi{10.1371/journal.pgen.1000167}.
\newblock URL \url{https://doi.org/10.1371/journal.pgen.1000167}.

\bibitem[Jayaraman et~al.(2020)Jayaraman, Wang, Knipmeyer, Gu, and
  Evans]{jayaraman2020revisiting}
B.~Jayaraman, L.~Wang, K.~Knipmeyer, Q.~Gu, and D.~Evans.
\newblock Revisiting membership inference under realistic assumptions.
\newblock \emph{arXiv preprint arXiv:2005.10881}, 2020.

\bibitem[Jayaraman et~al.(2021)Jayaraman, Wang, Knipmeyer, Gu, and
  Evans]{labeltwo}
B.~Jayaraman, L.~Wang, K.~Knipmeyer, Q.~Gu, and D.~Evans.
\newblock Revisiting membership inference under realistic assumptions.
\newblock \emph{Proceedings on Privacy Enhancing Technologies}, 2021:\penalty0
  348--368, 04 2021.

\bibitem[Jung et~al.(2023)Jung, Noarov, Ramalingam, and Roth]{jung2022batch}
C.~Jung, G.~Noarov, R.~Ramalingam, and A.~Roth.
\newblock Batch multivalid conformal prediction.
\newblock In \emph{International Conference on Learning Representations
  (ICLR)}, 2023.

\bibitem[Krizhevsky et~al.(2009)Krizhevsky, Hinton,
  et~al.]{krizhevsky2009learning}
A.~Krizhevsky, G.~Hinton, et~al.
\newblock Learning multiple layers of features from tiny images.
\newblock 2009.

\bibitem[Li et~al.(2020)Li, Jamieson, Rostamizadeh, Gonina, Ben-Tzur, Hardt,
  Recht, and Talwalkar]{li2020system}
L.~Li, K.~Jamieson, A.~Rostamizadeh, E.~Gonina, J.~Ben-Tzur, M.~Hardt,
  B.~Recht, and A.~Talwalkar.
\newblock A system for massively parallel hyperparameter tuning.
\newblock \emph{Proceedings of Machine Learning and Systems}, 2:\penalty0
  230--246, 2020.

\bibitem[Li and Zhang(2020)]{Li2020MembershipLI}
Z.~Li and Y.~Zhang.
\newblock Membership leakage in label-only exposures.
\newblock \emph{Proceedings of the 2021 ACM SIGSAC Conference on Computer and
  Communications Security}, 2020.

\bibitem[Liaw et~al.(2018)Liaw, Liang, Nishihara, Moritz, Gonzalez, and
  Stoica]{liaw2018tune}
R.~Liaw, E.~Liang, R.~Nishihara, P.~Moritz, J.~E. Gonzalez, and I.~Stoica.
\newblock Tune: A research platform for distributed model selection and
  training.
\newblock \emph{arXiv preprint arXiv:1807.05118}, 2018.

\bibitem[Liu et~al.(2022)Liu, Mao, Wu, Feichtenhofer, Darrell, and
  Xie]{liu2022convnet}
Z.~Liu, H.~Mao, C.-Y. Wu, C.~Feichtenhofer, T.~Darrell, and S.~Xie.
\newblock A convnet for the 2020s.
\newblock In \emph{Proceedings of the IEEE/CVF Conference on Computer Vision
  and Pattern Recognition}, pages 11976--11986, 2022.

\bibitem[Long et~al.(2018)Long, Bindschaedler, Wang, Bu, Wang, Tang, Gunter,
  and Chen]{long18}
Y.~Long, V.~Bindschaedler, L.~Wang, D.~Bu, X.~Wang, H.~Tang, C.~A. Gunter, and
  K.~Chen.
\newblock Understanding membership inferences on well-generalized learning
  models.
\newblock \emph{CoRR}, abs/1802.04889, 2018.
\newblock URL \url{http://arxiv.org/abs/1802.04889}.

\bibitem[Long et~al.(2020)Long, Wang, Bu, Bindschaedler, Wang, Tang, Gunter,
  and Chen]{long2020pragmatic}
Y.~Long, L.~Wang, D.~Bu, V.~Bindschaedler, X.~Wang, H.~Tang, C.~A. Gunter, and
  K.~Chen.
\newblock A pragmatic approach to membership inferences on machine learning
  models.
\newblock In \emph{2020 IEEE European Symposium on Security and Privacy
  (EuroS\&P)}, pages 521--534. IEEE, 2020.

\bibitem[Neyman and Pearson(1933)]{neymanpearson}
J.~Neyman and E.~S. Pearson.
\newblock On the problem of the most efficient tests of statistical hypotheses.
\newblock \emph{Philosophical Transactions of the Royal Society A},
  231:\penalty0 289--337, 1933.

\bibitem[Noarov and Roth(2023)]{noarov2023scope}
G.~Noarov and A.~Roth.
\newblock The scope of multicalibration: Characterizing multicalibration via
  property elicitation.
\newblock \emph{International Conference on Machine Learning (ICML)}, 2023.

\bibitem[Roth(2022)]{rothuncertain}
A.~Roth.
\newblock Uncertain: Modern topics in uncertainty estimation.
\newblock https://www.cis.upenn.edu/~aaroth/uncertainty-notes.pdf, 2022.

\bibitem[Russakovsky et~al.(2015)Russakovsky, Deng, Su, Krause, Satheesh, Ma,
  Huang, Karpathy, Khosla, Bernstein, Berg, and Fei-Fei]{imagenet15russakovsky}
O.~Russakovsky, J.~Deng, H.~Su, J.~Krause, S.~Satheesh, S.~Ma, Z.~Huang,
  A.~Karpathy, A.~Khosla, M.~Bernstein, A.~C. Berg, and L.~Fei-Fei.
\newblock {ImageNet Large Scale Visual Recognition Challenge}.
\newblock \emph{International Journal of Computer Vision (IJCV)}, 115\penalty0
  (3):\penalty0 211--252, 2015.
\newblock \doi{10.1007/s11263-015-0816-y}.

\bibitem[Sablayrolles et~al.(2019)Sablayrolles, Douze, Schmid, Ollivier, and
  J{\'e}gou]{sablayrolles2019white}
A.~Sablayrolles, M.~Douze, C.~Schmid, Y.~Ollivier, and H.~J{\'e}gou.
\newblock White-box vs black-box: Bayes optimal strategies for membership
  inference.
\newblock In \emph{International Conference on Machine Learning}, pages
  5558--5567. PMLR, 2019.

\bibitem[Salem et~al.(2018)Salem, Zhang, Humbert, Berrang, Fritz, and
  Backes]{Salem0HBF019}
A.~Salem, Y.~Zhang, M.~Humbert, P.~Berrang, M.~Fritz, and M.~Backes.
\newblock Ml-leaks: Model and data independent membership inference attacks and
  defenses on machine learning models.
\newblock \emph{arXiv preprint arXiv:1806.01246}, 2018.

\bibitem[Shokri et~al.(2017)Shokri, Stronati, Song, and
  Shmatikov]{shokri2017membership}
R.~Shokri, M.~Stronati, C.~Song, and V.~Shmatikov.
\newblock Membership inference attacks against machine learning models.
\newblock In \emph{2017 IEEE symposium on security and privacy (SP)}, pages
  3--18. IEEE, 2017.

\bibitem[Song and Mittal(2021)]{song2021systematic}
L.~Song and P.~Mittal.
\newblock Systematic evaluation of privacy risks of machine learning models.
\newblock In \emph{USENIX Security Symposium}, volume~1, page~4, 2021.

\bibitem[Watson et~al.(2021)Watson, Guo, Cormode, and
  Sablayrolles]{watson2021importance}
L.~Watson, C.~Guo, G.~Cormode, and A.~Sablayrolles.
\newblock On the importance of difficulty calibration in membership inference
  attacks.
\newblock \emph{arXiv preprint arXiv:2111.08440}, 2021.

\bibitem[Wen et~al.(2023)Wen, Bansal, Kazemi, Borgnia, Goldblum, Geiping, and
  Goldstein]{wen2023canary}
Y.~Wen, A.~Bansal, H.~Kazemi, E.~Borgnia, M.~Goldblum, J.~Geiping, and
  T.~Goldstein.
\newblock Canary in a coalmine: Better membership inference with ensembled
  adversarial queries.
\newblock In \emph{The Eleventh International Conference on Learning
  Representations}, 2023.
\newblock URL \url{https://openreview.net/forum?id=b7SBTEBFnC}.

\bibitem[Ye et~al.(2021)Ye, Maddi, Murakonda, and Shokri]{labelone}
J.~Ye, A.~Maddi, S.~K. Murakonda, and R.~Shokri.
\newblock Enhanced membership inference attacks against machine learning
  models.
\newblock \emph{CoRR}, abs/2111.09679, 2021.
\newblock URL \url{https://arxiv.org/abs/2111.09679}.

\bibitem[Yeom et~al.(2018)Yeom, Giacomelli, Fredrikson, and
  Jha]{yeom2018privacy}
S.~Yeom, I.~Giacomelli, M.~Fredrikson, and S.~Jha.
\newblock Privacy risk in machine learning: Analyzing the connection to
  overfitting.
\newblock In \emph{2018 IEEE 31st computer security foundations symposium
  (CSF)}, pages 268--282. IEEE, 2018.

\bibitem[Zagoruyko and Komodakis(2016)]{zagoruyko2016wide}
S.~Zagoruyko and N.~Komodakis.
\newblock Wide residual networks.
\newblock \emph{arXiv preprint arXiv:1605.07146}, 2016.

\end{thebibliography}

\clearpage


\appendix

\section{Shadow Model Architecture Mismatch}
\label{app:shadow_model_architecture_mismatch}

Here we explore how a shadow model membership inference attack is affected by the lack of knowledge of the target model. For this, we vary the shadow model's and target model's
architecture between 6 different configurations: 3 CNN models with 32 filters each and varying number of pooling layers between 1 and 3, 4 Wide Residual Networks \cite{zagoruyko2016wide} of depth 28 and varying width ($[1, 2, 5, 10]$). We train a single target model and 4 shadow models; all models are trained with SGD with momentum and random augmentations. The results are shown in Figure \ref{fig:shadow_model_architecture_mismatch}, where small architecture mismatches (i.e, same model family of target and test architecture) generally degrade performance very little, but larger architecture mismatches can cause significant performance degradation.

\begin{figure}[ht!]
    \centering
    \begin{subfigure}[b]{0.49\textwidth}
    \centering
    \includegraphics[width=1.\textwidth]{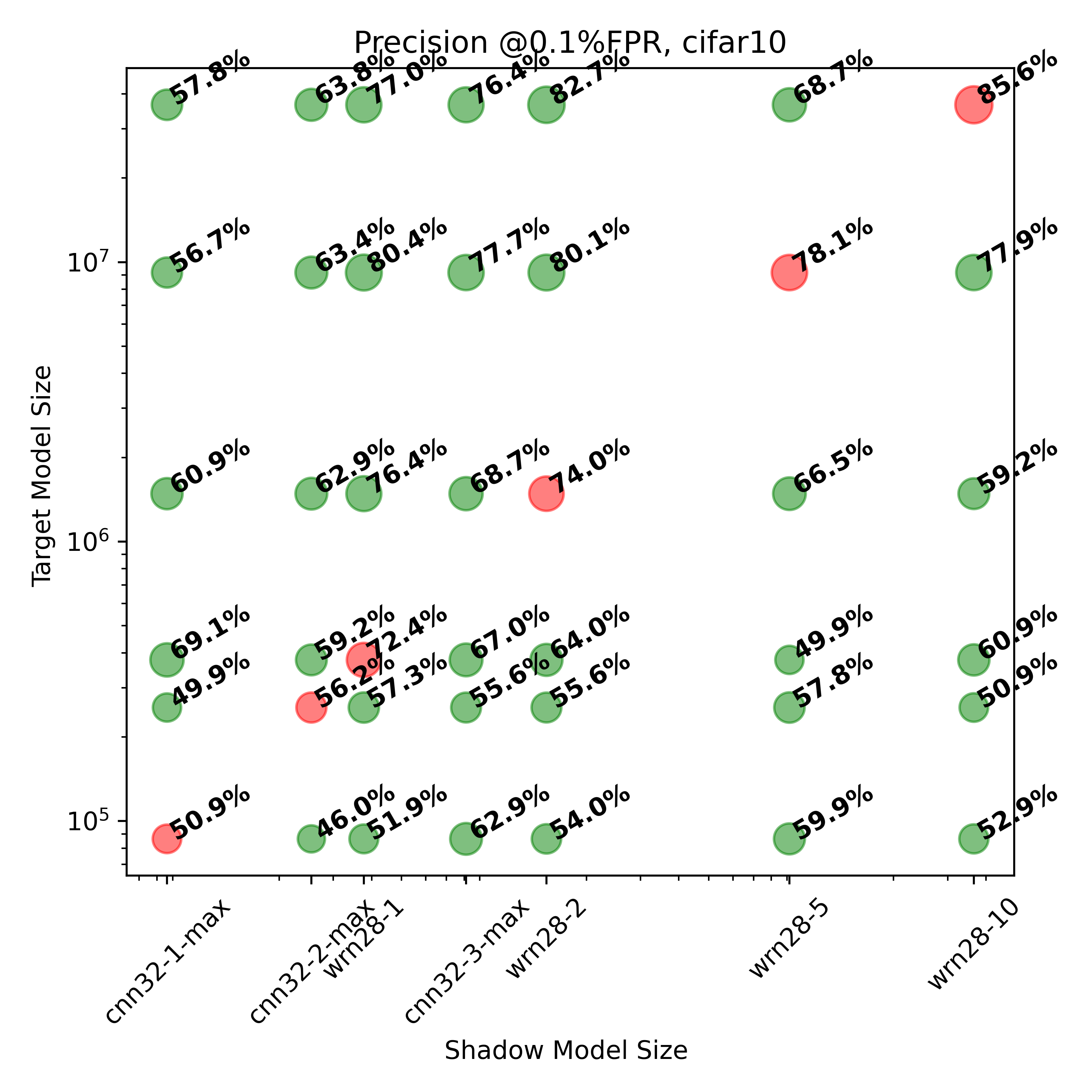}  
    \caption{CIFAR-10 }
    \end{subfigure}
    \begin{subfigure}[b]{0.49\textwidth}
    \centering
    \includegraphics[width=1.\textwidth]{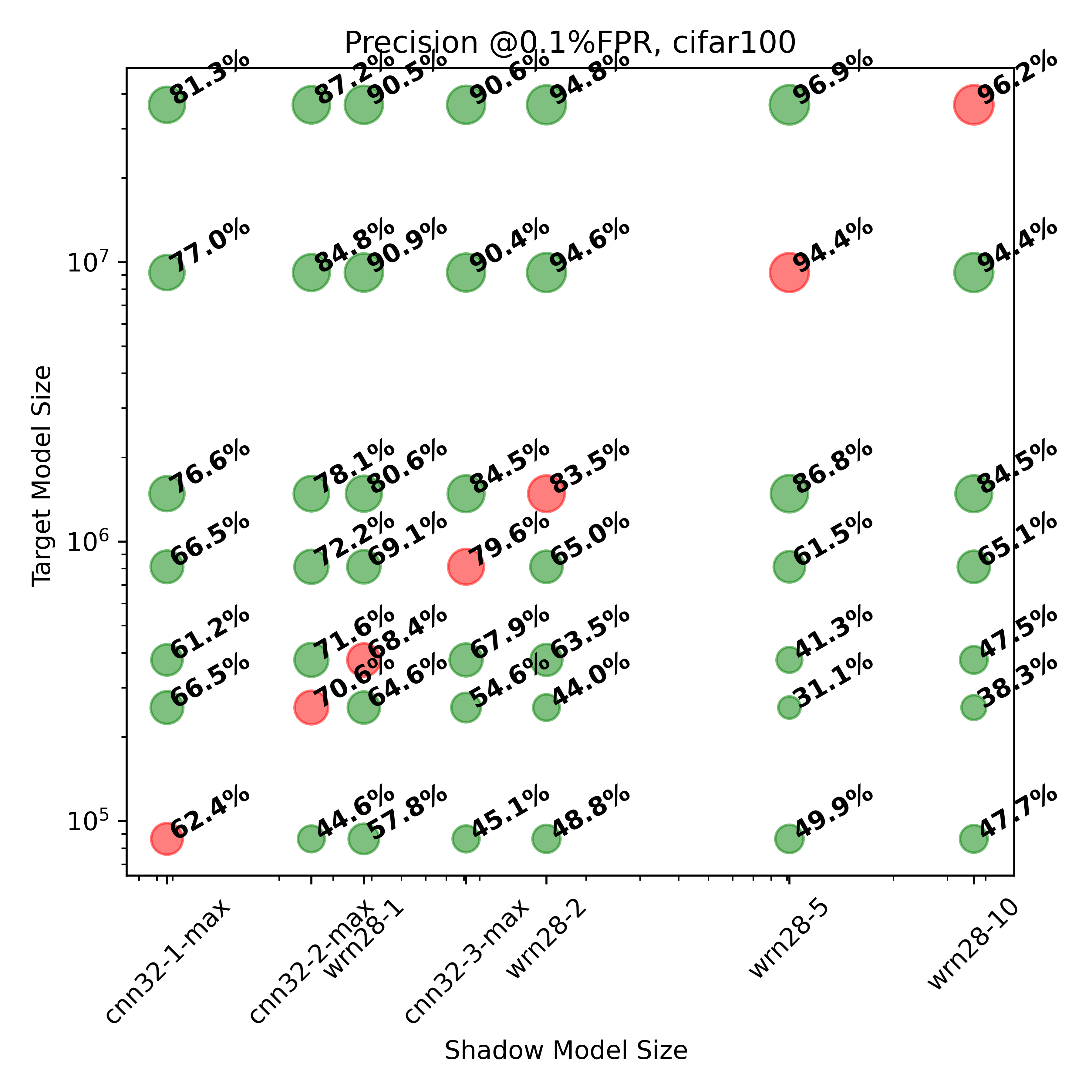}
    \caption{CIFAR-100}
    \end{subfigure}  
    \caption{
    Comparing the precision at $0.1\%$ false positive rate of the LIRA attack on CIFAR-10 and CIFAR-100 on mismatched target and shadow model architectures. Red circles are used to denote scenarios where target and shadow model architectures match.}    \label{fig:shadow_model_architecture_mismatch}
\end{figure}

\section{Target Model Accuracies}
\label{app:target_accuracies}

Here we summarize the accuracy of all target model architectures.

\begin{table}[ht]
\centering
\begin{tabular}{|r|rrrr|}
\hline
\multicolumn{1}{|l|}{} & \multicolumn{4}{c|}{Architecture}                                                                                \\ \hline
\textbf{Dataset}       & \textbf{ResNet-10 Acc} & \textbf{ResNet-18 Acc} & \textbf{ResNet-34 Acc} & \textbf{\textbf{ResNet-50 Acc}} \\ \hline
CIFAR-10     & $90.3\%$   &  $90.5\%$   &  $90.7\%$   & $91.0\%$  \\ \hline
CIFAR-100    & $66.7\%$   &  $68.6\%$   &  $68.3\%$   &  $68.6\%$  \\ \hline
ImageNet-1k  & -          & -           & -           & $67.5\%$   \\ \hline
\end{tabular}
\caption{Accuracy of target classifiers. All target classifiers are trained on $50\%$ of the usual training data split using the training setup described in \cite{he2015deep}. Due to the reduced ammount of training data, these target networks have lower accuracies than those initially reported in \cite{he2015deep}.}
\label{tab:accuracy-table}
\end{table}

\section{Proofs}
\subsection{Proof of Theorem 1}
\label{proof:theorem1}
\begin{proof}
    To establish this theorem, the following lemma will be useful, which informally states that if we can shift a model's quantile predictions to get its ``marginal coverage rate'' to be closer to the target $1-\alpha$, then we will also decrease the pinball loss of that model:

\begin{lemma}[\cite{jung2022batch,rothuncertain}]
\label{lem:shift}
    Fix any continuous distribution $\cP \in \Delta \mathbb{R}$ with density bounded by $\rho$. Suppose $\hat q$ is a model such that $\Pr_{y \sim \cP}[y \leq \hat q(x)] = 1-\alpha'$, and let $\Delta \in \mathbb{R}$ be such that $\Pr_{y \sim \cP}[y \leq \hat q(x) + \Delta] = 1-\alpha$.  Let $q(x)= \hat q(x) + \Delta$. Then:
    $$\E_{y \sim \cP}[\textrm{PB}_{1-\alpha}(\hat q(x), y)] - \E_{y \sim \cP}[\textrm{PB}_{1-\alpha}(q(x), y)] \geq \frac{(\alpha-\alpha')^2}{2\rho}$$
\end{lemma}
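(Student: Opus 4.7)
The plan is to relate the pinball-loss gap to an integral of the CDF of $\cP$ and then exploit the density upper bound $\rho$ to lower-bound that integral. Throughout I treat $\hat q(x)$ and $q(x)$ as fixed real numbers (the statement fixes the distribution on $y$ only), and write $F$ for the CDF of $\cP$.

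First, I would differentiate $L(t) := \E_{y \sim \cP}[\textrm{PB}_{1-\alpha}(t, y)]$. Splitting the expectation according to whether $y \leq t$ and differentiating under the integral gives the clean identity $L'(t) = F(t) - (1-\alpha)$. Two immediate consequences follow: by hypothesis $F(q) = 1-\alpha$, so $L'(q) = 0$ (confirming that $q$ is the unconstrained minimizer of $L$); and $L'(\hat q) = F(\hat q) - (1-\alpha) = \alpha - \alpha'$.

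Next, by the fundamental theorem of calculus I would write $L(\hat q) - L(q) = \int_q^{\hat q} \bigl(F(s) - F(q)\bigr)\, ds$ and focus on the case $\alpha > \alpha'$, so that $\hat q > q$ (the reverse case is handled symmetrically). The density bound gives $F(\hat q) - F(s) \leq \rho(\hat q - s)$, equivalently $F(s) - F(q) \geq (\alpha - \alpha') - \rho(\hat q - s)$. Setting $\delta := (\alpha-\alpha')/\rho$, this linear lower bound on the integrand is nonnegative precisely on $[\hat q - \delta, \hat q]$. Restricting the integral to this subinterval and evaluating the elementary one-variable integral $\int_0^\delta \bigl[(\alpha-\alpha') - \rho v\bigr]\, dv$ yields exactly $(\alpha-\alpha')^2/(2\rho)$.

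The main obstacle is only sign and orientation bookkeeping. I need to verify that $[\hat q - \delta, \hat q] \subseteq [q, \hat q]$, which is immediate because the density bound itself forces $\hat q - q \geq (F(\hat q) - F(q))/\rho = \delta$; and I need to handle the case $\alpha < \alpha'$, where $\hat q < q$ and the integration orientation reverses. There, the symmetric inequality $F(s) - F(\hat q) \leq \rho(s - \hat q)$ near $\hat q$ gives $F(q) - F(s) \geq (\alpha' - \alpha) - \rho(s - \hat q)$, and the same one-variable integration on $[\hat q, \hat q + \delta']$ with $\delta' := (\alpha'-\alpha)/\rho$ produces the matching bound $(\alpha - \alpha')^2/(2\rho)$. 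Once these conventions are settled, the derivative identity plus the one-sided density control does all the work.
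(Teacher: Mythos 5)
Your proof is correct, and it is worth noting that the paper does not actually prove this lemma itself---it is cited as a known result from \cite{jung2022batch,rothuncertain} and then applied in the proof of Theorem 1---so there is no in-paper proof to compare against. Your argument is the standard one for this type of pinball-loss calibration bound and is essentially the derivation those references use: compute $L'(t) = F(t) - (1-\alpha)$ by differentiating under the integral, observe $L'(q) = 0$ and $L'(\hat q) = \alpha - \alpha'$, rewrite $L(\hat q) - L(q) = \int_q^{\hat q}(F(s) - F(q))\,ds$ by the fundamental theorem of calculus, and then use the density bound $|F(a)-F(b)| \leq \rho|a-b|$ to trap a triangular region of area $(\alpha-\alpha')^2/(2\rho)$ under the integrand. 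Your verification that $[\hat q - \delta, \hat q] \subseteq [q,\hat q]$ (via $\hat q - q \geq (F(\hat q)-F(q))/\rho = \delta$) closes the one nontrivial bookkeeping step cleanly, and the symmetric case $\alpha < \alpha'$ is handled correctly.

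One small remark, not a gap in your reasoning but a wrinkle in how the lemma is stated and then used: as written, the lemma fixes $\cP \in \Delta\mathbb{R}$ and treats $\hat q(x), q(x)$ as fixed scalars, which is what you prove. In the proof of Theorem 1, however, the lemma is applied with expectations taken jointly over $(x,y)\sim\cD$, where $\hat q(x)$ genuinely varies with $x$. The scalar version you proved still delivers this application after observing that pinball loss is translation-equivariant, $\textrm{PB}_{1-\alpha}(a+c, b+c) = \textrm{PB}_{1-\alpha}(a,b)$; setting $Z = s(x,y) - \hat q(x)$ reduces the ``function-valued'' case to your scalar case with $\hat q \mapsto 0$, $q \mapsto \Delta$, and $\cP$ the marginal law of $Z$. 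You implicitly rely on this reduction by treating $\hat q(x)$ as fixed; making the translation step explicit would make the bridge to the paper's application airtight.
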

Recall that $q$ is chosen such that: $$q \in \arg\min_{q' \in \cH}\E_{(x,y) \sim \cD}[\textrm{PB}_{1-\alpha}(q(x), s(x,y))]$$

For point of contradiction, suppose that $\textrm{FPR}(\cA_q) = \alpha'$ for some $\alpha' \neq \alpha$. Expanding out the definition of the false positive rate, we  have that:
\begin{eqnarray*}
\alpha' &=& \textrm{FPR}(\cA_q) \\ &=& \Pr_{(x,y) \sim \cD}[\cA_q(x,y) = \bot] \\
    &=&\Pr_{(x,y) \sim \cD}[s(x,y) \geq q(x)] \\
    &=& 1 - \Pr_{(x,y) \sim \cD}[s(x,y) \leq q(x)] 
\end{eqnarray*}
So $\Pr_{(x,y) \sim \cD}[s(x,y) \leq q(x)] = 1-\alpha'$. Let $\Delta \in \mathbb{R}$ be such that $\Pr_{(x,y) \sim \cD}[s(x,y) \leq q(x)+\Delta] = 1-\alpha$---Note that such a $\Delta$ is guaranteed to exist by continuity of the distribution on $s(x,y)$. Let $q'(x) = q(x) + \Delta$. By Lemma \ref{lem:shift}, 
$$\E_{(x,y) \sim \cD}[\textrm{PB}_{1-\alpha}(q'(x), s(x,y))] < \E_{(x,y) \sim \cD}[\textrm{PB}_{1-\alpha}(q(x), s(x,y))]$$
But because $\cH$ is closed under additive shifts, we also have that $q' \in \cH$. Together, these contradict the optimality of $q$ as measured by pinball loss, which completes the proof. 
\end{proof}

\subsection{Proof of Theorem 2}
\label{proof:theorem2}
\begin{proof}
    Recall that $q$ is chosen such that: $$q \in \arg\min_{q' \in \cH}\E_{(x,y) \sim \cD}[\textrm{PB}_{1-\alpha}(q(x), s(x,y))]$$
    For point of contradiction, suppose that there is some $g \in \cG$ and some $\alpha' \neq \alpha$ such that $\cA_q$'s false positive rate conditional on $g(x) = 1$ is $\alpha'$. 
    Let $\cD_g$ be the conditional distribution on $\cD$ conditional on $g(x) = 1$, and let $\cD_{\bar g}$ be the conditional distribution on $\cD$ conditional on $g(x) = 0$. Expanding out definitions, we have that:
\begin{eqnarray*}
\alpha'  &=& \Pr_{(x,y) \sim \cD}[\cA_q(x,y) = \bot | g(x) = 1] \\
    &=&\Pr_{(x,y) \sim \cD_g}[s(x,y) \geq q(x)] \\
    &=& 1 - \Pr_{(x,y) \sim \cD_g}[s(x,y) \leq q(x)] 
\end{eqnarray*}
So $\Pr_{(x,y) \sim \cD_g}[s(x,y) \leq q(x)] = 1-\alpha'$. Let $\eta \in \mathbb{R}$ be such that $\Pr_{(x,y) \sim \cD_g}[s(x,y) \leq q(x)+\eta] = 1-\alpha$---Note that such an $\eta$ is guaranteed to exist by continuity of the distribution on $s(x,y)$ conditional on $g(x) = 1$. Let $q'(x) = q(x) + \eta g(x)$. By Lemma \ref{lem:shift}, 
$$\E_{(x,y) \sim \cD_g}[\textrm{PB}_{1-\alpha}(q'(x), s(x,y))] < \E_{(x,y) \sim \cD_g}[\textrm{PB}_{1-\alpha}(q(x), s(x,y))]$$
We can relate this decrease in pinball loss conditional on $g(x) = 1$ to the decrease in pinball loss on the underlying distribution $\cD$:
\begin{eqnarray*}
    && \E_{(x,y) \sim \cD}[\textrm{PB}_{1-\alpha}(q'(x), s(x,y))] \\
    &=& \Pr_{\cD}[g(x) = 1]     \E_{(x,y) \sim \cD_g}[\textrm{PB}_{1-\alpha}(q'(x), s(x,y))] + \Pr_{\cD}[g(x) = 0] \E_{(x,y) \sim \cD_{\bar g}}[\textrm{PB}_{1-\alpha}(q'(x), s(x,y))] \\
    &<& \Pr_{\cD}[g(x) = 1]     \E_{(x,y) \sim \cD_g}[\textrm{PB}_{1-\alpha}(q'(x), s(x,y))] + \Pr_{\cD}[g(x) = 0] \E_{(x,y) \sim \cD_{\bar g}}[\textrm{PB}_{1-\alpha}(q(x), s(x,y))] \\
     &=& \Pr_{\cD}[g(x) = 1]     \E_{(x,y) \sim \cD_g}[\textrm{PB}_{1-\alpha}(q'(x), s(x,y))] + \Pr_{\cD}[g(x) = 0] \E_{(x,y) \sim \cD_{\bar g}}[\textrm{PB}_{1-\alpha}(q'(x), s(x,y))] 
\end{eqnarray*}

But because $\cH$ is closed under additive shifts by all $g \in \cG$, we also have that $q' \in \cH$. Together, these contradict the optimality of $q$ as measured by pinball loss, which completes the proof.
\end{proof}

\section{Hyperparameters}
\label{app:hyperparameters}

We use Ray Tune \cite{liaw2018tune} for hyperparameter tuning on image datasets. All experiments use Async Hyperband Scheduler \cite{li2020system} and the Hyperopt search package \cite{bergstra2013making}. Table \ref{tab:hyperparameter_table} summarizes the hyperparameters that were tuned and their configurations

\begin{table}[ht]
\caption{Summary of hyperparameters optimized for our quantile regressor model on all image experiments}
\resizebox{\textwidth}{!}{%
\begin{tabular}{|c|c|c|}
\hline
 Hyperparameter & Configuration & Description\\ \hline

lr & loguniform($10^-6,10^-2$)    & Learning rate \\
Weight Decay & loguniform($2*10^-6,5*10^-3$)    & l2 weight regularization (excluding biases) \\
Hidden dims & choice([], [512,512]) & size and number of hidden dimensions  of MLP \\
Accumulate gradient batches & choice($[1, 2, 4, 8, 16, 32, 64])$ & number of batches to accumulate (base batch size=32) \\
Epochs & randint(start=5, stop=50, step=5) & number of training epochs \\ \hline
\end{tabular}
}
\label{tab:hyperparameter_table}
\end{table}

We ran 32 trials to find the optimal configuration per experiment. After examining the results, we found that a hidden dim of $[512,512]$ and accumulate gradient batches of 2 (Effective batch size 64) were consistently chosen across all experiments. Similarly, epochs were mostly chosen on the $10-20$ range, learning rates and weight decays were consistently chosen near the middle of the value range. This indicates that hyperparameter tuning may not be especially task sensitive and can be shared across attacks.

For tabular data, we use Catboost for model training, and Optuna \cite{optuna_2019} for hyperparameter tuning. Table \ref{tab:hyperparameter_tablular} presents the hyperparameters that were tuned and their corresponding ranges. Each model was tuned with 300 trials with 5-fold cross-validation.
\begin{table}[ht]
\caption{Summary of hyperparameters optimized for our quantile regressor model on tabular data}
\resizebox{0.9\textwidth}{!}{%
\begin{tabular}{|c|c|c|}
\hline
\multicolumn{1}{|c|}{Hyperparameter} & \multicolumn{1}{c|}{Configuration} & Description                          \\ \hline
depth                                & uniform(1,10)                      & Depth of a tree                      \\
l2\_leaf\_reg                        & loguniform(1e-2,1e+6)              & Strength of L2 regularization        \\
learning\_rate                       & loguniform(1e-6,1)                 & Learning rate of gradient boosting   \\
subsample                            & loguniform(1e-2,1)                 & Subsampling ratio at each leave node \\
iterations                           & loguniform(1,1000)                 & Number of boosting iterations        \\ \hline
\end{tabular}
}
\label{tab:hyperparameter_tablular}
\end{table}

\section{Additional Results}
\label{app:additional_results}

Here we show extended results on membership inference attacks on CIFAR-10 and CIFAR-100 for ResNet-10, -18, -34, and -50 architectures. Tables \ref{tab:precision-pinball-fpr-table-c10} and \ref{tab:precision-pinball-fpr-table-c100} show precision and pinball loss at $1\%$ and $0.1\%$ FPR for all target architectures on CIFAR-10 and CIFAR-100 respectively. We observe a strong correlation between lowest pinball loss on test samples and highest precision across the majority of experiments. Figure \ref{fig:mia_attack_cifar_resnet_50} additionally presents a visual comparison of the true positive rate and false positive rate trade-off for all tested methods on CIFAR-10 and CIFAR-100 for the ResNet-50 architecture.

\begin{table}[ht]
\centering
\caption{Precision and pinball loss  of all membership inference attack at 1\% and 0.1\% false positive rates on CIFAR10 for ResNet-10, -18, -34, and -50 architectures. Pinball losses are computed on a held out test set. Lower pinball losses consistently predict better membership inference performance.}
\label{tab:precision-pinball-fpr-table-c10}
\begin{tabular}{|l|rr|rr|}
\hline

\multicolumn{1}{|l|}{Method} & $PB_{1\%}$ & Precision @ $1\%$ FPR &$PB_{0.1\%}$ & Precision  @ $0.1\%$ FPR\\ \hline
& \multicolumn{4}{|c|}{CIFAR-10 ResNet-10}\\ \hline
LIRA (n=2) & 0.1454 & 80.43\% & 0.0194 & 89.20\%\\
LIRA (n=4) & 0.1439 & 83.45\% & 0.0193 & 93.15\%\\
LIRA (n=6) & 0.1441 & 84.46\% & 0.0193 & 94.47\%\\
LIRA (n=8) & 0.1442 & 84.79\% & 0.0193 & 94.67\%\\
Marginal Baseline & 0.2157 & 47.84\% & 0.0262 & 46.81\%\\
Ours & 0.1543 & 62.69\% & 0.0262 & 57.83\%\\
\hline
& \multicolumn{4}{|c|}{CIFAR-10 ResNet-18}\\ \hline
LIRA (n=2) & 0.1637 & 82.17\% & 0.0222 & 94.09\%\\
LIRA (n=4) & 0.1607 & 84.91\% & 0.0216 & 93.90\%\\
LIRA (n=6) & 0.1603 & 85.33\% & 0.0215 & 94.68\%\\
LIRA (n=8) & 0.1603 & 85.26\% & 0.0215 & 95.04\%\\
Marginal Baseline & 0.1853 & 43.43\% & 0.0223 & 50.00\%\\
Ours & 0.1707 & 63.08\% & 0.0213 & 65.65\%\\
\hline
& \multicolumn{4}{|c|}{CIFAR-10 ResNet-34}\\ \hline
LIRA (n=2) & 0.1674 & 82.75\% & 0.0225 & 92.45\%\\
LIRA (n=4) & 0.1672 & 84.16\% & 0.0223 & 94.45\%\\
LIRA (n=6) & 0.1669 & 85.65\% & 0.0222 & 95.40\%\\
LIRA (n=8) & 0.1668 & 85.79\% & 0.0222 & 95.30\%\\
Marginal Baseline & 0.1893 & 51.42\% & 0.0230 & 47.92\%\\
Ours & 0.1743 & 73.06\% & 0.0230 & 80.39\%\\
\hline
& \multicolumn{4}{|c|}{CIFAR-10 ResNet-50}\\ \hline
LIRA (n=2) & 0.1913 & 78.55\% & 0.0262 & 83.18\%\\
LIRA (n=4) & 0.1926 & 80.52\% & 0.0261 & 91.48\%\\
LIRA (n=6) & 0.1930 & 83.19\% & 0.0260 & 93.17\%\\
LIRA (n=8) & 0.1933 & 83.00\% & 0.0260 & 93.70\%\\
Marginal Baseline & 0.2163 & 48.70\% & 0.0287 & 60.94\%\\
Ours & 0.2070 & 62.95\% & 0.0277 & 59.68\%\\ \hline
\end{tabular}
\end{table}

\begin{table}[ht]
\centering
\caption{Precision and pinball loss  of all membership inference attack at 1\% and 0.1\% false positive rates on CIFAR100 for ResNet-10, -18, -34, and -50 architectures. Pinball losses are computed on a held out test set. Lower pinball losses consistently predict better membership inference performance.}
\label{tab:precision-pinball-fpr-table-c100}
\begin{tabular}{|l|rr|rr|}
\hline

\multicolumn{1}{|l|}{Method} & $PB_{1\%}$ & Precision @ $1\%$ FPR &$PB_{0.1\%}$ & Precision  @ $0.1\%$ FPR\\ \hline
& \multicolumn{4}{|c|}{CIFAR-100 ResNet-10}\\ \hline
LIRA (n=2) & 0.1486 & 95.44\% & 0.0188 & 99.06\%\\
LIRA (n=4) & 0.1386 & 95.95\% & 0.0170 & 98.97\%\\
LIRA (n=6) & 0.1385 & 96.19\% & 0.0170 & 99.10\%\\
LIRA (n=8) & 0.1385 & 96.14\% & 0.0170 & 99.06\%\\
Marginal Baseline & 0.2749 & 50.84\% & 0.0390 & 47.92\%\\
Ours & 0.1775 & 83.30\% & 0.0390 & 77.37\%\\
\hline
& \multicolumn{4}{|c|}{CIFAR-100 ResNet-18}\\ \hline
LIRA (n=2) & 0.1612 & 96.29\% & 0.0193 & 99.11\%\\
LIRA (n=4) & 0.1523 & 96.74\% & 0.0179 & 99.46\%\\
LIRA (n=6) & 0.1524 & 97.02\% & 0.0181 & 99.51\%\\
LIRA (n=8) & 0.1525 & 96.94\% & 0.0181 & 99.48\%\\
Marginal Baseline & 0.2754 & 51.71\% & 0.0364 & 59.02\%\\
Ours & 0.2492 & 90.37\% & 0.0364 & 89.63\%\\
\hline
& \multicolumn{4}{|c|}{CIFAR-100 ResNet-34}\\ \hline
LIRA (n=2) & 0.1840 & 95.73\% & 0.0213 & 99.10\%\\
LIRA (n=4) & 0.1732 & 96.34\% & 0.0203 & 99.17\%\\
LIRA (n=6) & 0.1726 & 96.45\% & 0.0202 & 99.31\%\\
LIRA (n=8) & 0.1727 & 96.42\% & 0.0202 & 99.28\%\\
Marginal Baseline & 0.2065 & 62.13\% & 0.0270 & 66.67\%\\
Ours & 0.1913 & 81.80\% & 0.0270 & 79.64\%\\
\hline
& \multicolumn{4}{|c|}{CIFAR-100 ResNet-50}\\ \hline
LIRA (n=2) & 0.1926 & 95.21\% & 0.0239 & 98.65\%\\
LIRA (n=4) & 0.1832 & 95.87\% & 0.0227 & 98.94\%\\
LIRA (n=6) & 0.1828 & 96.20\% & 0.0228 & 99.02\%\\
LIRA (n=8) & 0.1828 & 96.07\% & 0.0228 & 98.98\%\\
Marginal Baseline & 0.2188 & 58.81\% & 0.0272 & 65.75\%\\
Ours & 0.2006 & 79.57\% & 0.0272 & 85.41\%\\ \hline
\end{tabular}
\end{table}

 \begin{figure}[ht!]
    \centering
    \begin{subfigure}[b]{0.45\textwidth}
    \centering
    \includegraphics[width=1.\textwidth]{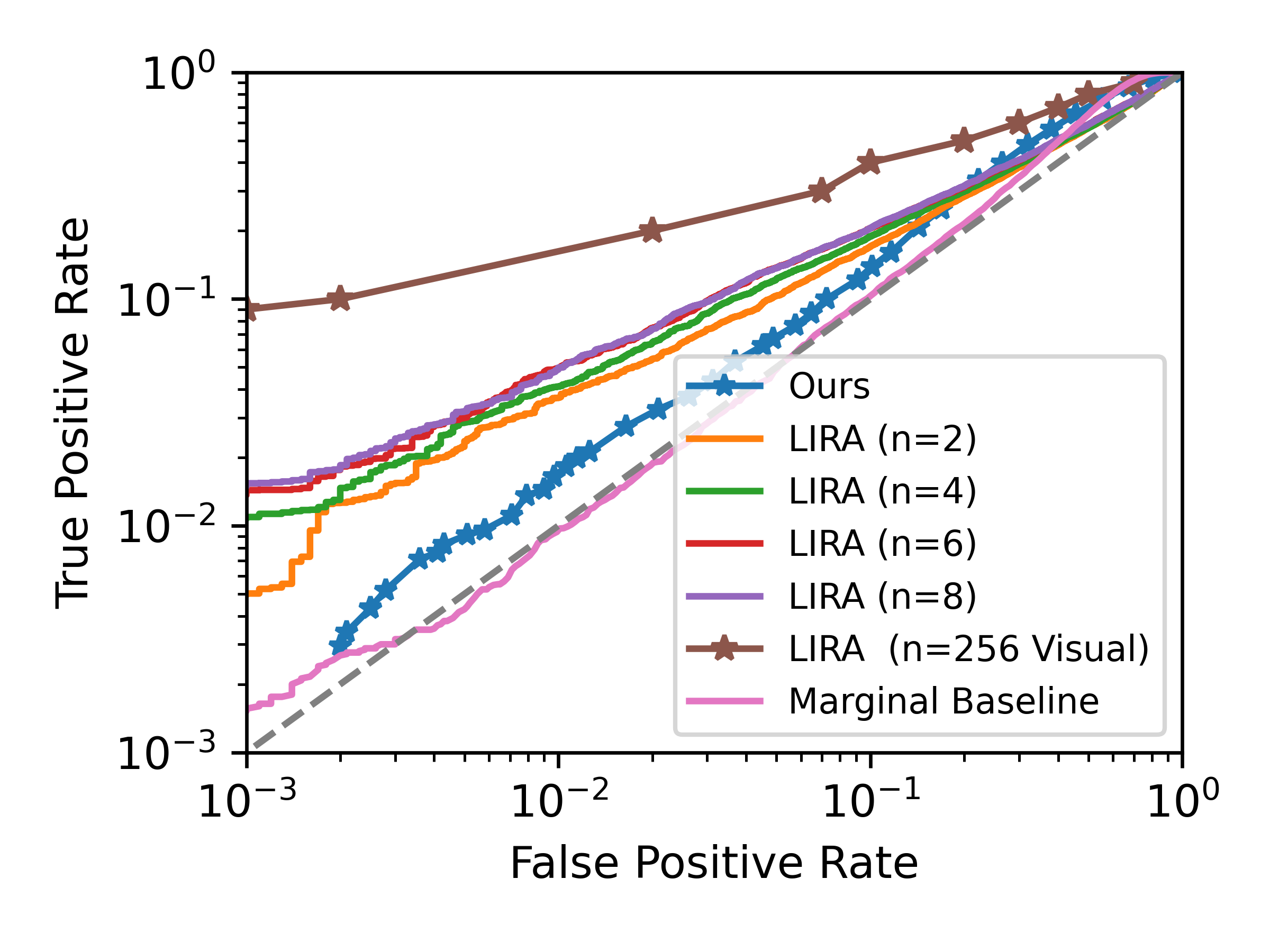}  
    \caption{CIFAR-10 }
    \end{subfigure}
    \begin{subfigure}[b]{0.45\textwidth}
    \centering
    \includegraphics[width=1.\textwidth]{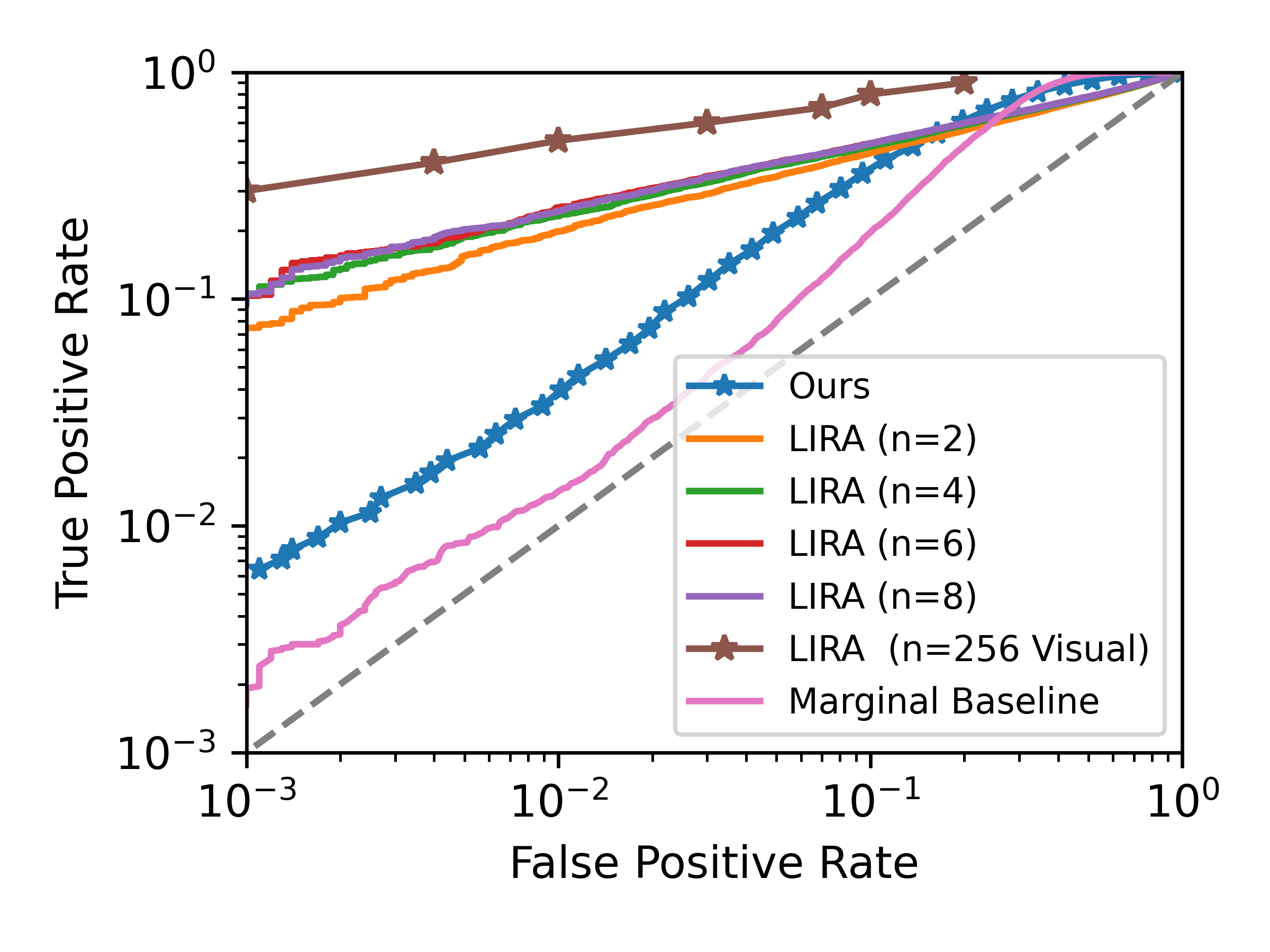}
    \caption{CIFAR-100}
    \end{subfigure}  
    \caption{
    Comparing the true positive rate vs. false positive rate of  our membership inference attack against the state-of-the-art shadow model approach LIRA proposed in \cite{carlini2022membership} evaluated at 2, 4, 6, and 8 shadow models, and the marginal baseline proposed in \cite{yeom2018privacy} Our single-model quantile regression attack can reliably perform membership inference attacks on a ResNet-50 CIFAR-10/CIFAR-100 target model ($91\%$ and $68.6\%$ test accuracies respectively) without relying on any knowledge of the target architecture. We include a visual readout of the $256$-shadow model attack shown in \cite{carlini2022membership} for reference. Our attack's effectiveness is dominates the marginal baseline but falls short of LIRA in this scenario. We find pinball loss to be a strong predictor of performance for membership inference attacks.}
    \label{fig:mia_attack_cifar_resnet_50}
\end{figure}

\end{document}